\documentclass{article}

% if you need to pass options to natbib, use, e.g.:
%     \PassOptionsToPackage{numbers, compress}{natbib}
% before loading neurips_2024

% ready for submission
%\usepackage{neurips_2024}

% to compile a preprint version, e.g., for submission to arXiv, add add the
% [preprint] option:
     \usepackage[preprint]{neurips_2024}

% to compile a camera-ready version, add the [final] option, e.g.:
%     \usepackage[final]{neurips_2024}

% to avoid loading the natbib package, add option nonatbib:
%    \usepackage[nonatbib]{neurips_2024}

\usepackage[utf8]{inputenc} % allow utf-8 input
\usepackage[T1]{fontenc}    % use 8-bit T1 fonts
\usepackage{hyperref}       % hyperlinks
\usepackage{url}            % simple URL typesetting
\usepackage{booktabs}       % professional-quality tables
\usepackage{amsfonts}       % blackboard math symbols
\usepackage{nicefrac}       % compact symbols for 1/2, etc.
\usepackage{microtype}      % microtypography
\usepackage{xcolor}         % colors

%-------------------------
\usepackage{subfig}
\usepackage{wrapfig}
\usepackage{amsmath}
\usepackage{amsthm}
\usepackage{mathtools,amssymb}
\usepackage{algorithm}
\usepackage{algorithmic}

\newtheorem{lemma}{Lemma}[section]
\newtheorem{thm}{\bf Theorem}[section]
\newtheorem{cor}{\bf Corollary}[section]
\newtheorem{definition}{\bf Definition}[section]

\newtheorem{remark}{\bf Remark}[section]

\title{Transductive Off-policy Proximal Policy Optimization}

% The \author macro works with any number of authors. There are two commands
% used to separate the names and addresses of multiple authors: \And and \AND.
%
% Using \And between authors leaves it to LaTeX to determine where to break the
% lines. Using \AND forces a line break at that point. So, if LaTeX puts 3 of 4
% authors names on the first line, and the last on the second line, try using
% \AND instead of \And before the third author name.

\author{
  Yaozhong Gan$^{{\ast}}$ \\
  Independent Researcher \\
  yzgancn@163.com \\
   \And
   Renye Yan \\
   Peking University \\
   victory@stu.pku.eu.cn \\
   \AND
    Xiaoyang Tan \\
	 Nanjing University of Aeronautics and Astronautics \\
	 x.tan@nuaa.edu.cn \\
	 \And  
   Zhe Wu \\
   Independent Researcher \\
   qingshan825@gmail.com \\
   \And
   Junliang Xing\thanks{corresponding author.} \\
   Tsinghua University \\
   jlxing@tsinghua.edu.cn \\
	}

%
%\author{%
%	David S.~Hippocampus\thanks{Use footnote for providing further information
%		about author (webpage, alternative address)---\emph{not} for acknowledging
%		funding agencies.} \\
%	Department of Computer Science\\
%	Cranberry-Lemon University\\
%	Pittsburgh, PA 15213 \\
%	\texttt{hippo@cs.cranberry-lemon.edu} \\
%	examples of more authors
%	\And
%	Coauthor \\
%	Affiliation \\
%	Address \\
%	\texttt{email} \\
%	\AND
%	Coauthor \\
%	Affiliation \\
%	Address \\
%	\texttt{email} \\
%	\And
%	Coauthor \\
%	Affiliation \\
%	Address \\
%	\texttt{email} \\
%	\And
%	Coauthor \\
%	Affiliation \\
%	Address \\
%	\texttt{email} \\
%}

\begin{document}

\maketitle

\begin{abstract}
  Proximal Policy Optimization (PPO) is a popular model-free reinforcement learning algorithm, esteemed for its simplicity and efficacy. However, due to its inherent on-policy nature, its proficiency in harnessing data from disparate policies is constrained. This paper introduces a novel off-policy extension to the original PPO method, christened Transductive Off-policy PPO (ToPPO). Herein, we provide theoretical justification for incorporating off-policy data in PPO training and prudent guidelines for its safe application. Our contribution includes a novel formulation of the policy improvement lower bound for prospective policies derived from off-policy data, accompanied by a computationally efficient mechanism to optimize this bound, underpinned by assurances of monotonic improvement. Comprehensive experimental results across six representative tasks underscore ToPPO's promising performance. 
%  The associated source code is available in the supplementary.
\end{abstract}

\section{Introduction}
Proximal Policy Optimization (PPO) \cite{Schppo} is a widely recognized, model-free reinforcement learning (RL) algorithm owing to its easiness to use in practice and its effectiveness. It has been successfully applied in many domains, such as Atari games \cite{Mni}, continuous control tasks \cite{Duan}, and robot control \cite{Lil}. 
The PPO algorithm can be considered a simplified version of the Trust Region Policy Optimization (TRPO) algorithm \cite{Schtrpo}, which is computationally more complex but offers better theoretical properties. However, both TRPO and PPO are on-policy algorithms, meaning they cannot use data from policies other than the underlying policy. This constraint potentially reduces their learning efficiency when off-policy data is available.

The above issue has recently drawn the attention of researchers, and several methods have been proposed. Typical algorithms include off-policy trust region policy optimization (off-policy TRPO) \cite{Wen}, generalized proximal policy optimization (GePPO) \cite{Que}, and so on. For example, GePPO addressed the off-policy issue by explicitly introducing a behavior policy in their objective.
%zhe
Specifically, GePPO employs the V-trace technique \cite{Esp} to estimate the advantage function, which serves as a commonly used off-policy correction method. However, as demonstrated in \cite{Schm}, the advantage function generated by the V-trace method is biased. This bias occurs because its state-value function may converge to points that differ quite from the current policy, such as the local optimum region, thereby impacting the policy's performance.

Despite the above efforts, in practice, it is not uncommon for people to train on-policy algorithms such as PPO using data collected from the previous several rounds without performing any off-policy correction. 
%zhe
For example, Hernandez-Garcia and Sutton \cite{Fer} ran the on-policy SARSA algorithm on off-policy data in the experience buffer and found no adverse effect on overall performance. Fujimoto
et al. \cite{Fuj2} demonstrated a high correlation between off-policy experiences and the current policy during environmental interaction. While this provides some intuitive explanation, there remains a notable gap between the empirical success and the underlying theoretical support. 

In this paper, we propose a new off-policy PPO algorithm with slight modifications to the original on-policy PPO. This modification enables PPO to utilize off-policy data in a theoretically sound manner, offering guidance on their safe usage. Inspired by transductive inference \cite{Joa,Bru}, our key focus is on estimating the desired advantage function for off-policy PPO directly from data generated by policies other than the current one, while ensuring monotonic policy improvement. By achieving this, we not only avoid the aforementioned bias issue but also enhance computational efficiency by eliminating the need to estimate the advantage function of the old policy in each iteration.

To this end, we introduce a novel formulation of the policy improvement lower bound for candidate policies. This formulation relies on the advantage function of the off-policy responsible for the data itself, thus mitigating bias problems. We optimize this performance bound by reformulating it as a constrained optimization problem. We solve it within the PPO computational framework, utilizing a modified clipping mechanism. 
And the method proposed in this paper can provide a theoretical explanation for the trick of reusing sample data in the PPO method \cite{Schppo}.
The resulting algorithm, Transductive Off-policy PPO (ToPPO), outperforms several closely related state-of-the-art off-policy algorithms on OpenAI Gym's MuJoCo environments \cite{Gre}.

\section{Preliminaries}\label{Backg}
%\subsection{Markov Decision Process}

Commonly, the reinforcement learning problem can be modeled as a Markov Decision Process (MDP), which is described by the tuple $\left\langle \mathcal{S}, \mathcal{A}, P, R, \gamma\right\rangle$ \cite{Sutton}.
$ \mathcal{S} $ and $ \mathcal{A} $ are the state space and action space respectively.
The function $ P(s'|s, a): \mathcal{S}\times\mathcal{A}\times\mathcal{S}\longmapsto [0, 1]$ is the transition probability function from state $ s $ to state $ s' $ under action $ a $.
The function $ R(s,a): \mathcal{S}\times\mathcal{A} \longmapsto \mathbb{R} $ is the reward function.
And $ \gamma \in [0, 1)$ is the discount factor for long-horizon returns.
In a state $ s $, the agent performs an action $ a $ according to a stochastic policy $ \pi: \mathcal{S}\times\mathcal{A} \longmapsto [0, 1] $ (satisfies $ \sum_{a}\pi(a|s)=1 $).
The environment returns a reward $ R(s, a) $ and a new state $ s' $ according to the transition function $ P(s'|s, a) $.
The agent interacts with the MDP to give a trajectory $ \tau $ of states , actions, and rewards, that is, $s_0, a_0, R(s_0, a_0), \cdots, s_t, a_t, R(s_t, a_t), \cdots $ over $ \mathcal{S}\times\mathcal{A}\times\mathbb{R} $ \cite{Sil}.
Under a given policy $ \pi $, the state-action value function and state-value function are defined as 
\begin{align*}
&Q^{\pi}(s_t, a_t) = \mathbb{E}_{\tau\sim\pi}[G_t|s_t, a_t],
%\\&
V^{\pi}(s_t) = \mathbb{E}_{\tau\sim\pi}[G_t|s_t],
\end{align*}
where $ G_t = \sum_{i=0}^{\infty}\gamma^i R_{t+i} $ is the discount return, and $ R_t = R(s_{t}, a_{t}) $. 
It is clear that $ V^{\pi}(s_t) = \mathbb{E}_{a_t}Q^{\pi}(s_t, a_t) $.
Correspondingly, advantage function can be represented $ A^{\pi}(s, a) = Q^{\pi}(s, a)-V^{\pi}(s) $.
We know that $ \sum_{a}\pi(a|s)A^{\pi}(s, a)=0 $.

Let $ \rho^{\pi} $ be a normalized discount state visitation distribution, defined
$ \rho^{\pi}(s) = (1-\gamma)\sum_{t=0}^{\infty}\gamma^t\mathbb{P}(s_t=s|\rho_0, \pi) , $
where $ \rho_0 $ is the initial state distribution \cite{Kak}.
%The $ G_t$ then is the total discount reward from time-step $ t $, $ G_t = \sum_{k=t}^{\infty}\gamma^{k-t}R(s_t, a_t) $ .
And the normalized discount state-action visitation distribution can be represented $ \rho^{\pi}(s,a) = \rho^{\pi}(s)\pi(a|s) $.
We make it clear from the context whether $ \rho^{\pi} $ refers to the state or state-action distribution.

The goal is to learn a policy that maximizes the expected total discounted reward $ \eta(\pi) $, defined
\begin{equation*}
\eta(\pi) = \mathbb{E}_{\tau\sim\pi}\left[\sum_{i=0}^{\infty}\gamma^i R(s_{i}, a_{i})\right].
\end{equation*}
%It is clear that $ \eta(\pi) = \mathbb{E}_{s_0\sim\rho_0}V^{\pi}(s_0)=\mathbb{E}_{(s, a)\sim\rho^{\pi}}R(s, a) $.
The following identity indicates that the distance between the policy performance of $ \pi $ and $ \hat{\pi} $ is related to the advantage over $ \pi $ \cite{Kak}:
\begin{equation}\label{dif_pi_hatpi}
\eta(\hat{\pi}) = \eta(\pi)+
\frac{1}{1-\gamma}\mathbb{E}_{s\sim\rho^{\hat{\pi}}, a\sim\hat{\pi}}\left[A^{\pi}(s, a)\right].
\end{equation}

Note that if $ \mathbb{E}_{a\sim\hat{\pi}}\left[A^{\pi}(s, a)\right]>0 $ is satisfied at every state $ s $, any policy update $ \pi\rightarrow\hat{\pi} $ can guarantee the improvement of policy performance $ \eta $.
However, Eqn.(\ref{dif_pi_hatpi}) is difficult to optimize due to the unknown of $ \rho^{\hat{\pi}} $.

\subsection{Policy Improvement Lower Bound}
For policy improvement, some researchers study the lower bound of Eqn.(\ref{dif_pi_hatpi}) to optimize the current policy $ \pi $ \cite{Pir, Met,Schtrpo, Schppo, Wen, Que}. 
Some on-policy and off-policy algorithms are implemented based on the following lemma, which is a generalized form of CPI \cite{Kak}.

\begin{lemma}\label{PILB}
	(Policy Improvement Lower Bound) Consider a current policy $ \pi_{k} $, and any policies $ \pi $ and $ \mu $, we have
	\begin{align*}
	\eta(\pi)-\eta(\pi_k)\geq
	&
	\frac{1}{1-\gamma}\mathbb{E}_{(s,a)\sim\rho^{\mu}}\left[\frac{\pi(a|s)}{\mu(a|s)}A^{\pi_k}(s,a)\right]
%	\\&
	-
	\frac{4\epsilon\gamma}{(1-\gamma)^2}\delta_{\max}^{\pi_k, \pi}\cdot\delta^{\pi, \mu},
	\end{align*}
	where $\epsilon=\max _{s, a}\left| A^{\pi_k}(s, a)\right|$, $\delta^{\pi, \mu}=\mathbb{E}_{s \sim \rho^{\mu}} \mathbb{D}_{\mathcal{T} \mathcal{V}}(\mu, \pi)(s)$, and $ \delta_{\max}^{\pi_k, \pi} = \max_s \mathbb{D}_{\mathcal{T} \mathcal{V}}(\pi_k, \pi)(s)$. $ \mathbb{D}_{\mathcal{T} \mathcal{V}}(\pi_1, \pi_2)(s) = \frac{1}{2}\sum_{a}|\pi_1(a|s)-\pi_2(a|s)|$ represents the total variation distance (TV) between $ \pi_1(a|s) $ and $ \pi_2(a|s) $ at every state $ s $.
\end{lemma}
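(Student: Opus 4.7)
The plan is to start from the performance difference identity in Eqn.(\ref{dif_pi_hatpi}) and peel off the desired surrogate $L(\pi;\mu):=\frac{1}{1-\gamma}\mathbb{E}_{(s,a)\sim\rho^{\mu}}[\tfrac{\pi(a|s)}{\mu(a|s)}A^{\pi_k}(s,a)]$ as the leading term. Applying importance sampling converts $\mathbb{E}_{a\sim\pi}[A^{\pi_k}(s,a)]$ to $\mathbb{E}_{a\sim\mu}[\tfrac{\pi(a|s)}{\mu(a|s)}A^{\pi_k}(s,a)]$ exactly, so the only discrepancy between $\eta(\pi)-\eta(\pi_k)$ and $L(\pi;\mu)$ lies in replacing the outer state distribution $\rho^{\pi}$ by $\rho^{\mu}$. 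Writing $f(s):=\mathbb{E}_{a\sim\pi}[A^{\pi_k}(s,a)]$, the task reduces to bounding $|\mathbb{E}_{s\sim\rho^{\pi}}f(s)-\mathbb{E}_{s\sim\rho^{\mu}}f(s)|$, which by H\"older's inequality is at most $\|f\|_{\infty}\cdot\|\rho^{\pi}-\rho^{\mu}\|_{1}$.

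For the supremum of $f$, I would exploit the identity $\sum_{a}\pi_k(a|s)A^{\pi_k}(s,a)=0$ to rewrite $f(s)=\sum_{a}(\pi(a|s)-\pi_k(a|s))A^{\pi_k}(s,a)$, which gives $|f(s)|\leq 2\epsilon\,\mathbb{D}_{\mathcal{T}\mathcal{V}}(\pi_k,\pi)(s)\leq 2\epsilon\,\delta_{\max}^{\pi_k,\pi}$. For the state-distribution TV, the target estimate is $\|\rho^{\pi}-\rho^{\mu}\|_{1}\leq\frac{2\gamma}{1-\gamma}\delta^{\pi,\mu}$. Multiplying these two bounds together and dividing by the leading factor $1/(1-\gamma)$ reproduces exactly the penalty $\frac{4\epsilon\gamma}{(1-\gamma)^2}\delta_{\max}^{\pi_k,\pi}\delta^{\pi,\mu}$ appearing in the lemma.

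The main technical obstacle is establishing the $\rho$-level TV bound in the \emph{expectation} (rather than maximum) form of the action TV. My plan is to expand $\rho^{\pi}-\rho^{\mu}=(1-\gamma)\sum_{t}\gamma^{t}(P_{t}^{\pi}-P_{t}^{\mu})$, where $P_{t}^{\pi}$ denotes the marginal state distribution at step $t$ under $\pi$, and prove by induction on $t$ that $\|P_{t}^{\pi}-P_{t}^{\mu}\|_{1}\leq 2\sum_{t'<t}\mathbb{E}_{s\sim P_{t'}^{\mu}}[\mathbb{D}_{\mathcal{T}\mathcal{V}}(\pi,\mu)(s)]$. The inductive step follows by adding and subtracting $P_{t}^{\mu}(s)\pi(a|s)P(s'|s,a)$ inside the one-step recursion and applying the triangle inequality; the choice to factor out $P_{t}^{\mu}$ rather than $P_{t}^{\pi}$ is crucial, because it is what produces expectations under the behaviour-side distribution in the final bound. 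Swapping summation orders and using $\sum_{t>t'}\gamma^{t}=\gamma^{t'+1}/(1-\gamma)$ then folds the right-hand side into $\frac{2\gamma}{1-\gamma}\delta^{\pi,\mu}$, and plugging this back into the H\"older estimate completes the proof.
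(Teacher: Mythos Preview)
Your proposal is correct and follows essentially the same route as the paper: start from the performance difference identity, split off the surrogate under $\rho^{\mu}$, apply H\"older with $p=\infty$, bound $\|f\|_{\infty}$ via $\sum_a\pi_k(a|s)A^{\pi_k}(s,a)=0$, and bound $\|\rho^{\pi}-\rho^{\mu}\|_1$ by $\frac{2\gamma}{1-\gamma}\delta^{\pi,\mu}$. The only difference is that the paper invokes the last bound as a cited lemma (from \cite{Ach}), whereas you re-derive it directly via the step-$t$ induction; both yield the same constant and the same final penalty.
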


The proof of the lemma is given in Appendix.

Compared with Corollary 3.6 of the paper \cite{Pir} or Theorem 1 of the off-policy TRPO \cite{Wen}, the lower bound of Lemma \ref{PILB} is improved by replacing the maximum operator with expectation. Note that the first term of the lower bound is called the surrogate objective function, and the second term is called the penalty term. The right-hand side of Lemma \ref{PILB} reveals that improving the surrogate objective can guarantee the improvement of expected total discounted reward $ \eta $ \cite{Schtrpo}. This can be done by optimizing the lower bound by using a linear approximation of the surrogate objective and a quadratic approximation of the penalty term.

\paragraph{Generalized Proximal Policy Optimization}
However, it is complicated to optimize whether using a constraint term or a penalty term because the upper bound parameters and the penalty coefficients of the constraint need to be manually adjusted in different environments.
A generalized version of proximal policy optimization (GePPO) \cite{Que} is proposed and based on the policy improvement lower bound in Lemma \ref{PILB}.
By considering the following surrogate objective function at each policy update, GePPO ensures that the new policy is close to the current policy:
\begin{align*}
\!L(\pi)\!=&\mathbb{E}_{i\sim\nu}\mathbb{E}_{(s,a)\sim\rho^{\pi_{k-i}}}
\!\min\!\left(\!\frac{\pi(a|s)}{\pi_{k-i}(a|s)}A^{\pi_{k}}(s,a)\right.\!,
%\\&
\left.\!\operatorname{clip}\!\left(\!\frac{\pi(a|s)}{\pi_{k-i}(a|s)}, l(s, a), u(s,a) \!\right)\!A^{\pi_{k}}(s,a)\!\right),
\end{align*}
where $ \text{clip}(x, a, b)=\min(\max(x, a), b) $ is a truncation function, $ l(s, a)=\frac{\pi_k(a|s)}{\pi_{k-i}(a|s)}-\epsilon $, $ u(s, a)=\frac{\pi_k(a|s)}{\pi_{k-i}(a|s)}+\epsilon $, distribution $ \nu=[\nu_0, \cdots, \nu_M] $ over the previous $ M $ policies, and $ \pi_k $ represents the current policy. The generalized clipping mechanism is obtained by constraining the total variation distance of $ \pi $, $ \pi_k $, and $ \pi_{k-i} $, but this mechanism has a small problem where $ l(s, a) $ may be less than zero. 
The above surrogate objective requires estimating the advantage function $ A^{\pi_k}(s, a) $ of the current policy using the previous trajectory samples. They use the V-trace technique \cite{Esp}. Unfortunately, as demonstrated in \cite{Schm}, the advantage function generated by the V-trace method is biased. This bias occurs because its state-value function may converge to points that are quite different from the current policy, such as the local optimum region, thereby impacting the policy’s performance, see Section \ref{ReGePPO} for a detailed discussion.

\section{Transductive Off-policy PPO}\label{off-trpo}
Given the inaccurate estimation of the advantage function $ A^{\pi_k} $ of the current policy $ \pi_k $ using the previous trajectory $ \tau\sim\mu$, our idea is whether we can utilize the advantage function $ A^{\mu} $ directly.
The answer is yes, but an additional condition is required.

First, we proposed a new surrogate objective function, defined as:
\begin{equation*}
\mathcal{L}_{\mu}(\pi) = \frac{1}{1-\gamma}\mathbb{E}_{(s,a)\sim\rho^{\mu}}\left[\frac{\pi(a|s)}{\mu(a|s)}A^{\mu}(s,a)\right].
\end{equation*}
The above formula does not suffer from the problem of the GePPO algorithm because this formula replaces $ A^{\pi_k} $ with $ A^{\mu} $.
Although this avoids the bias problem, an immediate question is whether it has good properties, \emph{i.e.}, performance bound. 
Next, a lower bound of policy performance about $ \mathcal{L}_{\mu}(\pi) $ is provided in the following lemma.

\begin{lemma}\label{PILB-v}
	(Lower Bound) Consider a current policy $ \pi_{k} $, and any policies $ \pi $ and $ \mu $, we have
	\begin{align}\label{pimu}
	\begin{aligned}
	\eta(\pi)-\eta(\pi_k)\geq
	&\mathcal{L}_{\mu}(\pi)-
	\frac{2(1+\gamma)\epsilon}{(1-\gamma)^2}
	\delta_{\max}^{\mu, \pi_k}
%	\\&
	-
	\frac{4\epsilon\gamma}{(1-\gamma)^2}
	\delta_{\max}^{\mu, \pi}\cdot\delta^{\mu, \pi},
	\end{aligned}
	\end{align}
	%	\begin{equation}\label{pimu}
	%		\eta(\pi)-\eta(\pi_k)\geq
	%		\mathcal{L}_{\mu}(\pi)-
	%		\frac{2(1+\gamma)\epsilon}{(1-\gamma)^2}
	%		\delta_{\max}^{\mu, \pi_k}-
	%		\frac{4\epsilon\gamma}{(1-\gamma)^2}
	%		\delta_{\max}^{\mu, \pi}\cdot\delta^{\mu, \pi}
	%	\end{equation}
	where $\epsilon=\max _{s, a}\left| A^{\mu}(s, a)\right|$, $\delta^{\mu, \pi}=\mathbb{E}_{s \sim \rho^{\mu}} \mathbb{D}_{\mathcal{T} \mathcal{V}}(\mu, \pi)(s)$, 
	$ \delta_{\max}^{\mu, \pi} = \max_s \mathbb{D}_{\mathcal{T} \mathcal{V}}(\mu, \pi)(s)$, and
	$ \delta_{\max}^{\mu, \pi_k} = \max_s \mathbb{D}_{\mathcal{T} \mathcal{V}}(\mu, \pi_k)(s)$. 
	$ \mathbb{D}_{\mathcal{T} \mathcal{V}}(\pi_1, \pi_2)(s) = \frac{1}{2}\sum_{a}|\pi_1(a|s)-\pi_2(a|s)|$ represents the total variation distance (TV) between $ \pi_1(a|s) $ and $ \pi_2(a|s) $ at every state $ s $.
\end{lemma}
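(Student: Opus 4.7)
The plan is to write
\[
\eta(\pi)-\eta(\pi_k) \;=\; \bigl[\eta(\pi)-\eta(\mu)\bigr] + \bigl[\eta(\mu)-\eta(\pi_k)\bigr]
\]
and bound the two brackets separately; the first will supply the $\mathcal{L}_\mu(\pi)$ term together with the quadratic $\delta^{\mu,\pi}_{\max}\delta^{\mu,\pi}$ penalty, while the second will yield the linear $\delta^{\mu,\pi_k}_{\max}$ penalty.

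\textbf{First bracket.} I would specialize Lemma \ref{PILB} by relabeling the current policy as $\mu$ and also taking the sampling policy to be $\mu$. Under this substitution the two TV distances in the penalty both collapse to distances between $\mu$ and $\pi$ (TV is symmetric, so $\delta^{\pi,\mu}=\delta^{\mu,\pi}$), and the quantity $\epsilon=\max_{s,a}|A^{\pi_k}|$ in Lemma \ref{PILB} becomes $\max_{s,a}|A^\mu|$, which agrees with the $\epsilon$ appearing in Lemma \ref{PILB-v}. The conclusion is
\[
\eta(\pi)-\eta(\mu) \;\geq\; \mathcal{L}_\mu(\pi) - \frac{4\epsilon\gamma}{(1-\gamma)^{2}}\,\delta^{\mu,\pi}_{\max}\,\delta^{\mu,\pi}.
\]

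\textbf{Second bracket.} I would start from identity (\ref{dif_pi_hatpi}) with $\hat\pi=\pi_k$, $\pi=\mu$, giving $\eta(\pi_k)-\eta(\mu)=\tfrac{1}{1-\gamma}\mathbb{E}_{s\sim\rho^{\pi_k},\,a\sim\pi_k}[A^\mu(s,a)]$. Setting $f(s):=\mathbb{E}_{a\sim\pi_k}[A^\mu(s,a)]$ and using $\sum_a\mu(a|s)A^\mu(s,a)=0$ gives the pointwise estimate $|f(s)|\leq 2\epsilon\,\mathbb{D}_{\mathcal{TV}}(\mu,\pi_k)(s)\leq 2\epsilon\,\delta^{\mu,\pi_k}_{\max}$. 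Next I would perform a change of measure $\rho^{\pi_k}\to\rho^\mu$, controlling the residual by the standard inequality $\|\rho^{\pi_k}-\rho^\mu\|_1\leq \tfrac{2\gamma}{1-\gamma}\delta^{\mu,\pi_k}_{\max}$ (Schulman et al.). This produces
\[
|\eta(\mu)-\eta(\pi_k)| \;\leq\; \frac{2\epsilon}{1-\gamma}\,\delta^{\mu,\pi_k}_{\max} + \frac{4\epsilon\gamma}{(1-\gamma)^{2}}\bigl(\delta^{\mu,\pi_k}_{\max}\bigr)^{2}.
\]
Finally, since every total-variation distance lies in $[0,1]$, I absorb the quadratic into the linear term via $(\delta^{\mu,\pi_k}_{\max})^{2}\leq \delta^{\mu,\pi_k}_{\max}$; combining the two fractions over $(1-\gamma)^{2}$ produces precisely the coefficient $\tfrac{2(1+\gamma)\epsilon}{(1-\gamma)^{2}}$. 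Adding the two brackets gives (\ref{pimu}).

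\textbf{Main obstacle.} The non-trivial point is landing on the exact constant $\tfrac{2(1+\gamma)\epsilon}{(1-\gamma)^{2}}$. A naive single-step bound of $\eta(\mu)-\eta(\pi_k)$ using only the performance-difference identity yields the tighter-looking $\tfrac{2\epsilon}{1-\gamma}\delta^{\mu,\pi_k}_{\max}$, but does not match the stated form. Reaching the precise $(1+\gamma)$ coefficient requires the two-stage change-of-measure decomposition (a first-order term on $\rho^\mu$ plus a $\rho^{\pi_k}\!-\!\rho^\mu$ correction) followed by the $\mathbb{D}_{\mathcal{TV}}\leq 1$ absorption. Careful bookkeeping of which TV measure ($\delta_{\max}$ vs.\ $\delta$) appears at each step is the only delicate part.
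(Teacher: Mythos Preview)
Your argument is correct and lands on exactly the stated bound, but it proceeds by a different decomposition than the paper. The paper does \emph{not} split $\eta(\pi)-\eta(\pi_k)$ through $\eta(\mu)$; instead it keeps the single performance--difference expression $\frac{1}{1-\gamma}\mathbb{E}_{s\sim\rho^{\pi},a\sim\pi}[A^{\pi_k}(s,a)]$ and adds/subtracts terms to write it as $\mathcal{L}_\mu(\pi)+\frac{1}{1-\gamma}\Phi_1+\frac{1}{1-\gamma}\Phi_2$, where $\Phi_1=\mathbb{E}_{\rho^\pi,\pi}[A^{\pi_k}-A^{\mu}]$ is an \emph{advantage--function} mismatch and $\Phi_2=(\mathbb{E}_{\rho^\pi}-\mathbb{E}_{\rho^\mu})\mathbb{E}_{a\sim\pi}[A^{\mu}]$ is the \emph{state--distribution} mismatch. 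The $\Phi_2$ bound is the same computation you did in your first bracket. For $\Phi_1$ the paper invokes a corollary of Lemma~\ref{Kak} giving $\|V^{\pi_k}-V^{\mu}\|_\infty\le\frac{1}{1-\gamma}\|\mathbb{E}_{a\sim\pi_k}A^\mu\|_\infty$ and $\|Q^{\pi_k}-Q^{\mu}\|_\infty\le\frac{\gamma}{1-\gamma}\|\mathbb{E}_{a\sim\pi_k}A^\mu\|_\infty$, so that $\|A^{\pi_k}-A^{\mu}\|_\infty\le\frac{1+\gamma}{1-\gamma}\cdot 2\epsilon\,\delta_{\max}^{\mu,\pi_k}$; the $(1+\gamma)$ constant thus appears \emph{naturally} as the sum of the $Q$- and $V$-contributions, with no absorption trick needed.

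Your route is more modular in that it reuses Lemma~\ref{PILB} as a black box for the first bracket, but the cost is precisely the issue you flag: matching the exact $(1+\gamma)$ coefficient requires the extra change-of-measure plus $(\delta_{\max}^{\mu,\pi_k})^2\le\delta_{\max}^{\mu,\pi_k}$ step. In fact your ``naive'' bound $|\eta(\mu)-\eta(\pi_k)|\le\frac{2\epsilon}{1-\gamma}\delta_{\max}^{\mu,\pi_k}$ already implies the stated inequality (since $\frac{2\epsilon}{1-\gamma}\le\frac{2(1+\gamma)\epsilon}{(1-\gamma)^2}$), so the detour is only needed to reproduce the paper's constant exactly rather than to prove the lemma.
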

%\begin{proof}
%	Similar to the Lemma \ref{PILB}, see the Appendix.
%\end{proof}
%	{\color{red} 
%		\textit{Sketch of Proof.} 
%		Similar to Lemma \ref{PILB}, 
%		the only difference is that there's only one more term in the process of proving.
%%		, that is, the advantage function of the surrogate objective function is $ A^{\mu} $ instead of $ A^{\pi_{\theta_{k}}} $.
%		
%		\hfill $\square$
%	}
The proof of the lemma is given in Appendix.

Compared with the Lemma \ref{PILB}, the lower bound of the Lemma \ref{PILB-v} has an extra term $ \delta_{\max}^{\pi_k, \mu} $, except for the advantage function $ A^{\mu} $.
This term arises by replacing the current advantage function $ A^{\pi_{k}} $ with $ A^{\mu} $.
Note that when $ \mu=\pi_{k} $, this lower bound is consistent with the Theorem 1 of TRPO \cite{Schtrpo}.

\begin{thm}\label{IMPI}
	(Monotonic Improvement)
	Consider the current policy $ \pi_k $, define
	\begin{equation*}
	F_{\pi_{k}}(\pi)=\mathcal{L}_{\pi_{k}}(\pi)-\frac{4\epsilon\gamma}{(1-\gamma)^2}\delta_{\max}^{\pi_k, \pi}\cdot\delta^{\pi_k, \pi}.
	\end{equation*}
	Assume $ \pi_{k+1}= \arg\max_{\pi} F_{\pi_{k}}(\pi)$ exists and satisfies $ F_{\pi_{k}}(\pi_{k+1})>0 $, there exists policy $ \mu $ and constant $ \alpha>0 $ that satisfies $ d(\mu, \pi_{k})<\alpha $, then
	\begin{equation*}
	\mathcal{L}_{\mu}(\pi_{k+1})-
	\frac{2(1+\gamma)\epsilon}{(1-\gamma)^2}
	\delta_{\max}^{\mu, \pi_k}-
	\frac{4\epsilon\gamma}{(1-\gamma)^2}
	\delta_{\max}^{\mu, \pi_{k+1}}\cdot\delta^{\mu, \pi_{k+1}}>0.
	\end{equation*}
\end{thm}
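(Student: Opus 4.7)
The plan is to argue by continuity in the ``off-policy'' argument $\mu$, using the hypothesis $F_{\pi_k}(\pi_{k+1})>0$ as the boundary case at $\mu=\pi_k$. Introduce the shorthand
\[
G_{\mu}(\pi) := \mathcal{L}_{\mu}(\pi)-\frac{2(1+\gamma)\epsilon}{(1-\gamma)^2}\delta_{\max}^{\mu,\pi_k}-\frac{4\epsilon\gamma}{(1-\gamma)^2}\delta_{\max}^{\mu,\pi}\cdot\delta^{\mu,\pi},
\]
so that the desired conclusion is precisely $G_{\mu}(\pi_{k+1})>0$ for every $\mu$ with $d(\mu,\pi_k)<\alpha$. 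The key initial observation is that when $\mu=\pi_k$ one has $\delta_{\max}^{\pi_k,\pi_k}=0$ and $\mathcal{L}_{\pi_k}(\pi_{k+1})$ collapses to the on-policy surrogate used in $F_{\pi_k}$, so $G_{\pi_k}(\pi_{k+1})=F_{\pi_k}(\pi_{k+1})>0$ by assumption.

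Next I would establish that $\mu\mapsto G_{\mu}(\pi_{k+1})$ is continuous at $\mu=\pi_k$ with respect to $d$ (taken as total variation, KL, or any equivalent distance). The three TV-type terms $\delta_{\max}^{\mu,\pi_k}$, $\delta_{\max}^{\mu,\pi_{k+1}}$ and $\delta^{\mu,\pi_{k+1}}$ are Lipschitz in $\mu$ by direct computation and the triangle inequality. The surrogate $\mathcal{L}_{\mu}(\pi_{k+1})$ needs more care: its continuity in $\mu$ follows once one shows that the state-visitation distribution $\rho^{\mu}$ and the advantage $A^{\mu}$ depend continuously on $\mu$, which are standard Markov-chain and Bellman-operator perturbation results provided $\mu$ and $\pi_k$ are bounded away from zero pointwise.

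Having secured continuity, a standard $\varepsilon$-$\alpha$ argument finishes the proof: choose $\alpha>0$ small enough that $|G_{\mu}(\pi_{k+1})-F_{\pi_k}(\pi_{k+1})|<\tfrac{1}{2}F_{\pi_k}(\pi_{k+1})$ whenever $d(\mu,\pi_k)<\alpha$, which yields $G_{\mu}(\pi_{k+1})>\tfrac{1}{2}F_{\pi_k}(\pi_{k+1})>0$. The main obstacle is controlling the importance-sampling ratio $\pi_{k+1}(a|s)/\mu(a|s)$ inside $\mathcal{L}_{\mu}$ as $\mu$ approaches $\pi_k$: this is precisely where a positivity or bounded-likelihood-ratio assumption on the policies is required, both to make the perturbation argument rigorous and to quantify the size of the admissible neighbourhood $\alpha$ (which will depend on $\epsilon$, $\gamma$, the lower bound on $\mu$ and $\pi_k$, and the slack $F_{\pi_k}(\pi_{k+1})$).
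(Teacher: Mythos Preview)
Your proposal is correct and follows essentially the same approach as the paper: define the function $G_\mu(\pi_{k+1})$ (the paper calls it $\hat{F}_{\mu,\pi_k}(\pi_{k+1})$), observe that it equals $F_{\pi_k}(\pi_{k+1})>0$ at $\mu=\pi_k$, and invoke continuity in $\mu$ (the paper simply cites the ``locally sign-preserving property of continuous functions'') to obtain the neighbourhood of radius $\alpha$. Your treatment is in fact more careful than the paper's, which asserts continuity without discussing the dependence of $\rho^\mu$, $A^\mu$, or the importance ratio on $\mu$.
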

The proof of the lemma is given in Appendix.

Note that $ F_{\pi_{k}}(\pi) $ is the lower bound of TRPO.
From Theorem \ref{IMPI}, we know that there exists policy $ \mu $ that satisfies the monotonic improvement of policy performance.
Furthermore, we can see that our proposed method may give a better lower bound.
%We need to find a policy that satisfies the constraint that the current policy and $ \mu $ are close.
One can be see that if $ \alpha $ is equal to zero, it becomes an on-policy algorithm; if $ \alpha $ is large, 
%it will deviate significantly from the current policy $ \pi_{\theta_{k}} $, thus 
it may not satisfy the policy improvement, so we need find a proper value for $\alpha$.
%We see that $ \mu $ represents the previous policies $ \pi_{k-i} $, $ i = 0, \cdots, M $, satisfies $ M\leq k $.
And it avoids estimating the advantage function of the current policy by using a variant of the importance sampling.
Next, since the specific form of $ \mu $ isn't known, we consider the parameterized policy $ \mu_{\phi}(a|s) $.
And we give a practical method by deducing the constrained optimization problem.

\paragraph{Derivation of the Constrained Optimization Problem}

To optimize the lower bound, given $ \pi_{\theta_k} $, we consider parameterized policy $ \pi_{\theta}(a|s) $, and $ \mu_{\phi}(a|s) $.
We evaluate the following maximization problem:
\begin{align*}
%	\underset{\theta}{\operatorname{maximize}}\ \mathcal{L}_{\mu_{\phi}, \pi_{\theta_k}}(\pi_{\theta})
%	\triangleq
\underset{\phi, \theta}{\operatorname{maximize}}\ 
%	\left[
\mathcal{L}_{\mu_{\phi}}(\pi_{\theta})-
\frac{4\epsilon\gamma}{(1-\gamma)^2}
\left[\delta_{\max}^{\mu_{\phi}, \pi_{\theta_k}}+
\delta_{\max}^{\mu_{\phi}, \pi_{\theta}}\cdot\delta^{\mu_{\phi}, \pi_{\theta}}\right].
%	\right]
\end{align*}

When $ \mu_{\phi} = \pi_{\theta_{k}}$, the above formula reduces to the objective function of TRPO. Therefore, TRPO can be considered as a special case of our approach.
Better lower bounds may be obtained by optimizing this objective function, .
Next, a solution similar to TRPO \cite{Liu, Schtrpo} is adopted to optimize it.
The optimized step is very small because the penalty coefficient $ \frac{4\epsilon\gamma}{(1-\gamma)^2} $ recommended by the Lemma \ref{PILB-v} is too large.
And this penalty imposes trust region constraints that the TV distance is bounded at every state. This is impractical to solve because the state space is huge. Therefore, we use the average TV distance %in Eqn.(\ref{max_KL_ob}) 
to approximate:
\begin{align}\label{ave_KL_ob}
\begin{aligned}
&\underset{\phi, \theta}{\operatorname{maximize}}\
\mathcal{L}_{\mu_{\phi}}(\pi_{\theta}),\\
&\text{subject to}\
\delta^{\mu_{\phi}, \pi_{\theta_k}}\leq\alpha_1, \
\delta^{\mu_{\phi}, \pi_{\theta}}\cdot\delta^{\mu_{\phi}, \pi_{\theta}}\leq\alpha_2,
\end{aligned}
\end{align}
where $\delta^{\mu_{\phi}, \pi_{\theta_k}}=\mathbb{E}_{s \sim \rho^{\mu_{\phi}}} \mathbb{D}_{\mathcal{T} \mathcal{V}}(\mu_{\phi}, \pi_{\theta_k})(s)$, and $\delta^{\mu_{\phi}, \pi_{\theta}}=\mathbb{E}_{s \sim \rho^{\mu_{\phi}}} \mathbb{D}_{\mathcal{T} \mathcal{V}}(\mu_{\phi}, \pi_{\theta})(s)$.

This constraint optimization problem can be solved approximately by using the alternating direction method of multipliers (ADMM) \cite{Che, Tao}, but it's still challenging to optimize $ \phi $ by Eqn.(\ref{ave_KL_ob}). To address this challenge, we give an approximate solution to avoid directly optimizing $ \phi $. For the first constraint term in Eqn.(\ref{ave_KL_ob}), we observe that the first term is an average TV distance between $ \mu_{\phi} $ and $ \pi_{\theta_k} $ and observe that $ \mu_{\phi}=\pi_k $ naturally holds. Our idea is not to optimize the parameter $ \phi $ directly and is to represent $ \mu_{\phi} $ with the previous policies $ \pi_{k-i} $, $ i = 0, \cdots, M $, satisfies $ M\leq k $. This way can make full use of off-policy data to learn the current policy. We must be aware that not all previous policies can represent $ \mu_{\phi} $ -- the constraint $ \delta^{\mu_{\phi}, \pi_{\theta_k}}\leq\alpha_1 $ must be satisfied.

Finally, we optimize the above constraint problem in two steps:
the first step is to select policies that satisfy the constraints from the last $ M $ policies, that is, 
$ \delta^{\mu_{\phi}, \pi_{\theta_k}}\leq\alpha_1 $, 
where $ \mu_{\phi}=\pi_{\theta_{k-i}} $, $ i=0, \cdots, M $, satisfies $ M\leq k $;
and for the second step, we optimize $ \theta $ directly by solving the following constraint optimization problem:
\begin{align}\label{fil_KL_ob}
\begin{aligned}
&\underset{\theta}{\operatorname{maximize}}\ \mathbb{E}_{(s,a)\sim\rho^{\mu}}\left[\frac{\pi_{\theta}(a|s)}{\mu(a|s)}A^{\mu}(s,a)\right],\\
&\text{subject to}\
\delta^{\mu_{\phi}, \pi_{\theta}}\leq\sqrt{\alpha_2}.\!
\end{aligned}
\end{align}
We know that $\delta^{\mu, \pi}=\mathbb{E}_{s \sim \rho^{\mu}} \mathbb{D}_{\mathcal{T} \mathcal{V}}(\mu, \pi)(s)\leq \sqrt{\mathbb{E}_{s \sim \rho^{\mu}} \mathbb{D}_{\mathcal{T} \mathcal{V}}^2(\mu, \pi)(s)}\leq\sqrt{\mathbb{E}_{s \sim \rho^{\mu}} \mathbb{D}_{KL}(\mu, \pi)(s)}$.
Thus, the constraint term can replace the TV distance with the Kullback-Leibler (KL) divergence \cite{Kul}. By constraining the upper bound, the TV distance is constrained. The above problem can be solved by a linear approximation of the surrogate objective function and a quadratic approximation of the constraint. Similar policy updates have been proposed in previous work \cite{Ach, Wen, Schtrpo}.

\subsection{The Clipped Surrogate Objection}\label{to-ppo}

In the previous section, when we optimize the problem (\ref{fil_KL_ob}), it faces a serious problem in the second step: we need to store the $ M $ previous policy $ \mu $ network parameters, 
%due to a quadratic approximation of $ \delta^{\mu_{\phi}, \pi_{\theta}} $.
which is because if we optimize the problem (\ref{fil_KL_ob}) directly, this will be optimized in the same way as TRPO, using a linear approximation of the surrogate objective and a quadratic approximation of the penalty term. When we compute a quadratic approximation $ \frac{\partial}{\partial \theta_{i}} \frac{\partial}{\partial \theta_{j}} \mathbb{E}_{s \sim \rho{\pi}} [D_{\mathrm{KL}}(\pi(\cdot|s, \theta_{k-i}) | \pi(\cdot|s, \theta))]|_{\theta=\theta_{k-i}} $, we need to keep the network parameters of $ \pi(\cdot|s,\theta_{k-i}) $. If there are many previous policies, it needs to retain many networks and takes up a lot of computer memory. 
%	Hence this problem is impractical to solve.
It is therefore impractical to solve this problem. Inspired by PPO \cite{Schppo}, a practical variant of TRPO, we propose a new clipped surrogate objection according to Eqn.(\ref{fil_KL_ob}), defined as

\begin{equation}\label{of_ppo}
\begin{aligned}
L(\pi)=&\mathbb{E}_{(s,a)\sim \rho^{\mu}}
\min\left(\frac{\pi(a|s)}{\mu(a|s)}A^{\mu}(s,a)\right.,
%\\&
\left.\text{clip}\left(\frac{\pi(a|s)}{\mu(a|s)}, l(s, a), u(s,a) \right)A^{\mu}(s,a)\right),
\end{aligned}
\end{equation}
where $ \mu(a|s)=\pi_{k-i}(a|s) $, and $ l(s, a) $ and $ u(s, a) $ are the clipping of probability ratio $ \frac{\pi(a|s)}{\mu(a|s)} $ lower and upper bounds, respectively.

In this way, we do not need to store the last $ M $ policy $ \pi_{k-i} $ network parameters and only need to save the probability value $ \mu(a|s) $ of the corresponding action $ a $ under state $ s $ in practice. Thus it becomes a very practical version. Note that we do not define $ l(s, a) $ and $ u(s, a) $. From the research of PPO \cite{Schppo, Que}, we find that the choice of the lower bound and the upper bound is very important, which connects to the policy's performance.

\begin{definition}\label{pi_k_i_pi}
	Consider a current policy $ \pi_k $ and clipping parameter $ \epsilon $,  and for any previous policies $ \pi_{\theta_{k-i}} $, $ i=0,\cdots,M $, the lower and upper bounds are defined as
	\begin{align}\label{up_lo}
	\begin{aligned}
	l(s,a) = \max(\frac{\pi_{k}(a|s)}{\pi_{k-i}(a|s)}-\epsilon, 0),\
	u(s,a) = \frac{\pi_{k}(a|s)}{\pi_{k-i}(a|s)}+\epsilon.
	\end{aligned}
	\end{align}
\end{definition}

Notice that the upper and lower bounds we defined are different from those in paper \cite{Que}. The main difference is in the lower bound, where we added a max function. This will avoid cases where the lower bound is less than zero. Thus, no incorrect optimization policies will be generated when the advantage function is less than zero.
In a heuristic way, these lower and upper bounds are related to the previous policy $ \pi_{k-i} $ and the current policy $ \pi_k $. 
Sampling state-action pairs from the state-action visitation distribution $ \rho^{\pi_{k-i}} $ can be simply viewed as an unbiased estimate of the above formulas.
This clipping mechanism removes the incentive for $ \frac{\pi}{\pi_{k-i}} -\frac{\pi_k}{\pi_{k-i}} $ to exceed $ \epsilon $. This can be simply viewed as an off-policy clipping mechanism.

Combining Eqn.(\ref{of_ppo}) and Eqn.(\ref{up_lo}), we present the Transductive Off-policy Proximal Policy Optimization algorithm (ToPPO), a practical variant that uses off-policy data and avoids the problem of storing the $ M $ previous policy network parameters faced by the constraint problem (\ref{fil_KL_ob}):
\begin{multline}\label{off_ppo_loss}
L_k(\pi)= \mathbb{E}_{(s,a) \sim \rho^{\pi_{k-i}}} \left[ \min \left( \frac{\pi(a|s)}{\pi_{k-i}(a|s)} A^{\pi_{k-i}}(s,a), \right. \right. \\ \left. \left.\operatorname{clip}\left( \frac{\pi(a|s)}{\pi_{k-i}(a|s)},\max(\frac{\pi_k(a|s)}{\pi_{k-i}(a|s)}-\epsilon, 0),\frac{\pi_k(a|s)}{\pi_{k-i}(a|s)}+\epsilon \right) A^{\pi_{k-i}}(s,a) \right) \right]. \ \ \
\end{multline}
This is the optimization objective function for the $ k $-th update. Algorithm 1 (due to space limitations, see the appendix) shows the detailed implementation pipeline. In each iteration, the ToPPO algorithm is divided into four steps: collect samples, update the policy network, select policies. 
The third step of Algorithm 1 are described in detail below.

\paragraph{Selecting policies} 
The third step is policy selection. According to the formula (\ref{ave_KL_ob}), we must first choose a suitable previous policy $ \mu $ to satisfy $ \delta^{\mu, \pi_{\theta_{k}}}\leq\alpha $, where $ \mu \in M $, and $ \alpha $ is the filter boundary. If $ \delta^{\mu, \pi_{\theta_{k}}}>\alpha $, we will delete the trajectories of $ \mu $ from $ M $. Otherwise, we will keep them. We then define a maximum length $ N $ of $ M $, \emph{i.e.}, $ N=|M| $. If the newly added sample in $ M $ exceeds the maximum length $ N $, the oldest policy is deleted, and the latest policy is kept. This approach is beneficial to the stability of the training progress. And the choice of the filter boundary $ \alpha $ is particularly important, it affects the performance of the algorithm, that is, if $ \alpha $ is equal to zero, it becomes an on-policy algorithm; if $ \alpha $ is large, 
it may not satisfy the policy improvement. Therefore, we need to choose a suitable $ \alpha $ value. 
Note that unlike DISC \cite{Seu}, which heuristically reuses old samples only constrains the upper bound of the ratio, but selecting policy method is theoretically guaranteed in this paper. Furthermore, this method can improve the stability of the overall training progress.
The reason why the selecting policies are placed in the third step is that the first iteration does not need to select.

\begin{remark}
	The selecting policies step is relevant to the theory presented in this paper without any heuristic elements. Although the practical version of the method proposed in this paper in Eqn. (\ref{off_ppo_loss}) is similar to the GePPO, the perspective of considering the problem is completely different. In addition to having good monotonicity, the method proposed in this paper explains a trick of the PPO algorithm very well in section \ref{Ana}, which is not available in the GePPO. 
\end{remark}

In appendix, we give a way to choose the clipping parameter $ \epsilon^{\text{ToPPO}} $ in Eqn. (\ref{off_ppo_loss}). 
It is worth noting that this paper only gives a lower threshold for the $ \epsilon^{\text{ToPPO}} $ value that can be chosen, due to the fact that the algorithm in this paper includes a step to select policy, which lead the size of the replay buffer $ M $ to be dynamically changing.
Thus the $ \epsilon^{\text{ToPPO}} $ value is also dynamic. It has been found experimentally that better results are achieved in some environments if the $ \epsilon^{\text{ToPPO}} $ value is dynamic. But this could bring some instability. Therefore, the main experiments are conducted with fixed the $ \epsilon^{\text{ToPPO}} $ value.

\section{Discussion}\label{Ana}

\subsection{Reanalyze the PPO Algorithm}

To achieve better performance, the PPO algorithm \cite{Schppo} uses a trick in its implementation, that is, it uses the sample data collected by the current policy several times to optimize the policy. This trick can be explained by the intuition that since the sample data is used to optimize the policy in each iteration, the new policy is not so far away from the current one. According to the theory of TRPO, the data can also be used to optimize the policy if the distance between the old and new policy isn't far away. When the policy is optimized by using the current samples, the current policy $ \pi_{\theta_{k}} $ will become $ \pi_{\theta_{k, 1}} $. And then the parameters of the policy $ \pi_{\theta_{k, 1}} $ will be optimized by reusing the sample data of the current policy. But one point to keep in mind is that the samples are still generated by the current policy $ \pi_{\theta_{k}} $ interacting with the environment rather than the policy $ \pi_{\theta_{k, 1}} $. Therefore, this update is a slightly different from the theory of TRPO \cite{Schtrpo} and is not exactly equivalent. Overall, the PPO algorithm leverages the trick of repeatedly using the data from the current policy during each iteration, that is, $ \pi_{\theta_{k}}\rightarrow \pi_{\theta_{k, 1}}\rightarrow\pi_{\theta_{k, 2}}\rightarrow\cdots\rightarrow\pi_{\theta_{k, N}}=\pi_{\theta_{k+1}} $. 

However, the approach proposed in this paper can provide a theoretical explanation that the PPO algorithm is reasonable in this way. Since the surrogate objection defined in this paper is shown in Eqn.(\ref{off_ppo_loss}), our algorithm can use the historical policy $ \pi_{\theta_{k-i}} $ data to optimize the policy, and the advantage function of surrogate function is for the policy $ \pi_{\theta_{k-i}} $. This way suggests that the data of the policy $ \pi_{\theta_{k-i}} $ can be used several times to optimize the policy, and is consistent with the trick used by PPO.

We analyze the PPO algorithm. By optimizing the surrogate function of PPO to obtain the new policy $ \pi_{\theta_{k+1}} $, we show that 
$$ \mathbb{E}_{s\sim\rho^{\pi_{\theta_{k}}}} V^{\pi_{\theta_{k+1}}}(s) \geq \mathbb{E}_{s\sim\rho^{\pi_{\theta_{k}}}} V^{\pi_{\theta_{k}}}(s).$$ 
The proof is given in the appendix. Given the current policy, the expectation of the value function of the new policy $ \pi_{\theta_{k+1}} $ is higher than the current $ \pi_{\theta_{k}} $. In other words, the value function is increasing on average, but may not be necessarily in every state. If the policy can be represented in tabular form, then it holds that $ V^{\pi_{\theta_{k+1}}}\geq V^{\pi_{\theta_{k}}} $, because the policy can be optimized in every state. It does not prove the PPO algorithm completely but provides insight into the effectiveness of the PPO algorithm from a certain perspective. Again, this inequality can provide another point of view for estimating the value function in offline reinforcement learning \cite{Kum}. The increase in the value function may be inevitable. So, it may be bad to be too conservative in estimating the value function and should be as mild as possible \cite{Lyu, Nak}.

\subsection{Reanalyze the GePPO Algorithm}\label{ReGePPO}

\begin{wrapfigure}[]{r}{0.35\textwidth}
	%	\vspace{-10pt}
	\vskip -5pt
	\centering
	\subfloat{\includegraphics[width=0.35\textwidth]{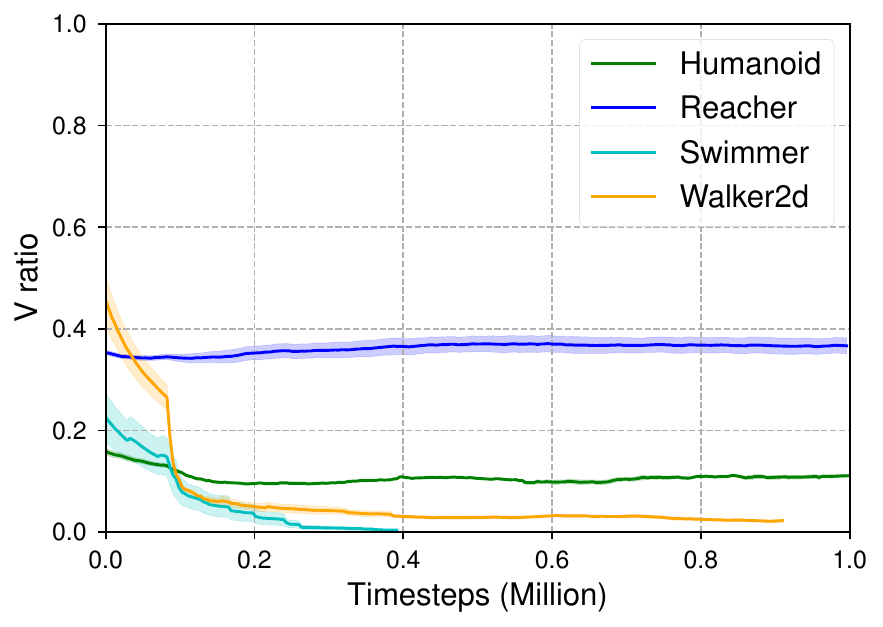}}
	\caption{The ratio of the difference between the true $ V^{\pi} $ values and the estimated $ \hat{V} $ values using V-trace technique to the true $ V^{\pi} $ values, \emph{i.e.} $  \mbox{V ration}=|\frac{V^{\pi}-\hat{V}}{V^{\pi}}| $.}
	\label{V_ratio}
	\vspace{-15pt}
\end{wrapfigure}

The GePPO algorithm requires estimating the advantage function $ A^{\pi_k}(s, a) $ of the current policy by the previous trajectory samples $ \tau\sim\mu $.
They use the V-trace technique \cite{Esp} by $V^{\pi_{\bar{\rho}}}\left(s_{t}\right)=V\left(s_{t}\right)+\sum_{k=0}^{K-1} \gamma^{k}\left(\prod_{i=0}^{k-1} c_{i}\right) \rho_{t} \delta_{t+k} V$, where $ \delta_{t+k} V= R_t +\gamma V(s_{t+1})-V(s_t) $ is the temporal difference error, $ \rho_t = \min\left(\frac{\pi_t}{\mu_t}, \bar{\rho}\right) $, and $ c_t=\min\left(\frac{\pi_t}{\mu_t}, \bar{c}\right) $ are truncated importance sampling (IS)
weights.
$ \pi_{\bar{\rho}} $ is defined as
\begin{equation*}
\pi_{\bar{\rho}}(a|s)=\frac{\min \left(\bar{\rho} \mu(a|s), \pi(a|s)\right)}{\sum_{b \in \mathcal{A}} \min \left(\bar{\rho} \mu(b|s), \pi(b|s)\right)}.
\end{equation*}

Note that the estimation of the value $ V^{\pi_{\bar{\rho}}} $ by the V-trace is biased, because the biased policy $ \pi_{\bar{\rho}} $ is very different from $ \pi $. Additionally, the estimated advantage function $ A^{\pi_k}(s, a) $ is also biased. As an illustrative example \cite{Schm}, consider two policies over a set of two actions, \emph{e.g.}, “left” and “right” in a tabular case. For any suitable small $ \phi\leq 1 $, define $ \mu=(\phi, 1-\phi) $ and $ \pi=(1-\phi, \phi) $, we see that when $ \phi\rightarrow 0 $, $ \pi $ and $ \mu $ become more focused on one action, then they rarely share trajectories. When $ \bar{\rho}=1 $, we see that $ \pi_{\bar{\rho}} $ is a uniform distribution. The V-trace estimate $ V^{\pi_{\bar{\rho}}} $ would calculate the average value of “left” and “right”, but this poorly represents the $ V^{\pi_{k}} $. And they show that the algorithm can get stuck in local optima and affect the policy's performance.
As a result, GePPO and off-policy TRPO may face the same problem. 

We also calculate their difference in several environments in the GePPO algorithm. In each iteration, given the state $ s $, the estimated value $ \hat{V}(s) $ is calculated by the V-trace technique. On the other hand, taking the state $ s $ as the starting point of the environment, the estimate of the true value $ V^{\pi}(s) $ is calculated from the trajectories generated by interacting the policy with the environment.
From Figure \ref{V_ratio}, the curve describes the ratio of the difference between the true $ V^{\pi} $ values and the estimated $ \hat{V} $ values using the V-trace technique to the true $ V^{\pi} $ values. One can be seen that the larger the ratio, the more significant the difference between them. 
This figure show that the $ V $ values of the current policy estimated by V-trace technique are inaccurate.
Furthermore, the estimated advantage function $ A $ of the current policy may also be inaccurate, which inevitably introduces bias and affects the policy's performance (see Fig. \ref{performance}).

\section{Experiments}\label{expe}
In this section, we present our experimental results to verify the effectiveness of the proposed ToPPO method on six continuous control tasks 
%(HalfCheetah, Swimmer, Reacher, HumanoidStandup, Walker2d, Humanoid) 
from the MuJoCo environments \cite{Tod} and some Atari games  \cite{Gre}.% in OpenAI Gym \cite{Gre}.
Algorithm 1 (due to space limitations, see the appendix) gives the detailed implementation pipeline.
We conduct all the experiments mainly based on the code from Queeney et al. \cite{Que}.
For all methods, we use the same neural network architecture \cite{baselines, Hen}.
Our proposed method experiments with several closely related algorithms which have policy improvement guarantees, \emph{i.e.}, TRPO \cite{Schtrpo}, PPO \cite{Schppo}, DISC \cite{Han}, Off-policy TRPO (OTRPO) \cite{Wen}, Off-policy PPO (OPPO) \cite{Men}, GePPO \cite{Que}. See the appendix for details of the experimental setup.

\subsection{Evaluation}
%Main Experimental Analysis
\paragraph{Performance improvement} 
We evaluate the proposed ToPPO method. Figure \ref{performance} shows all the results of each algorithm. We observe from the figure that compared to PPO, ToPPO can improve the sample efficiency except for Walker2d, and it is the same as GePPO. This is because our method is able to utilize the off-policy data to update policy. We also see that GePPO can't improve the sampling efficiency in some environments, such as HumanoidStandup. According to the previous theory in Section \ref{Backg}, we infer that the estimated value of the advantage function is biased by using the truncated importance sampling from Figure \ref{V_ratio}, resulting in the poor performance of GePPO. Compared to GePPO, ToPPO does not introduce bias, and can improve performance. Although the difference between our method and GePPO seems small, there is a large gap in the mechanism, which is probably the reason for the good performance of our method.

\begin{figure*}[t]
	\centering
	\subfloat[HalfCheetah]{\includegraphics[width=0.24\textwidth]{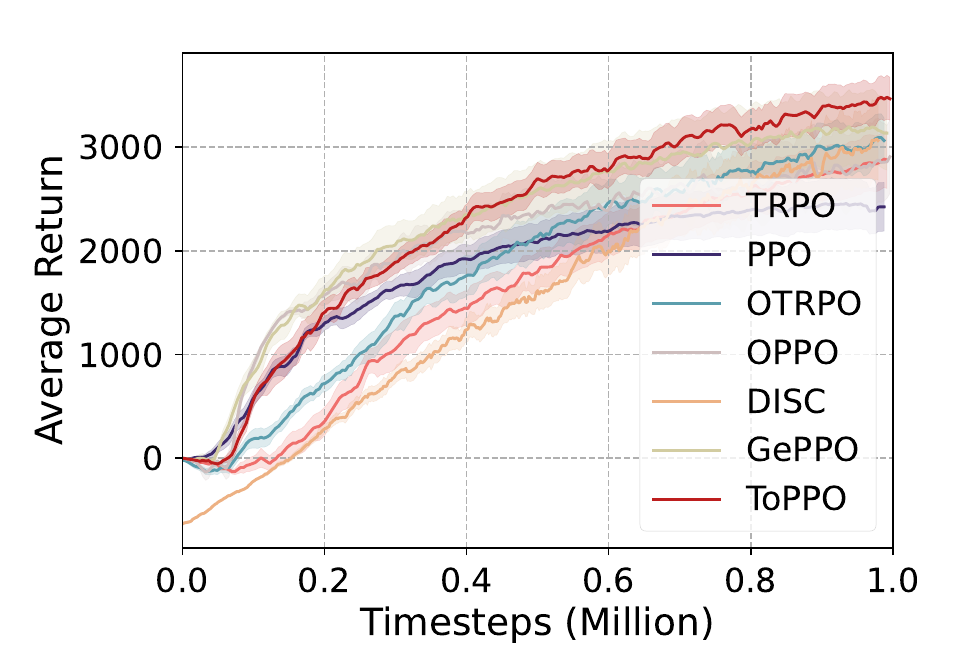}}
	\subfloat[Swimmer]{\includegraphics[width=0.24\textwidth]{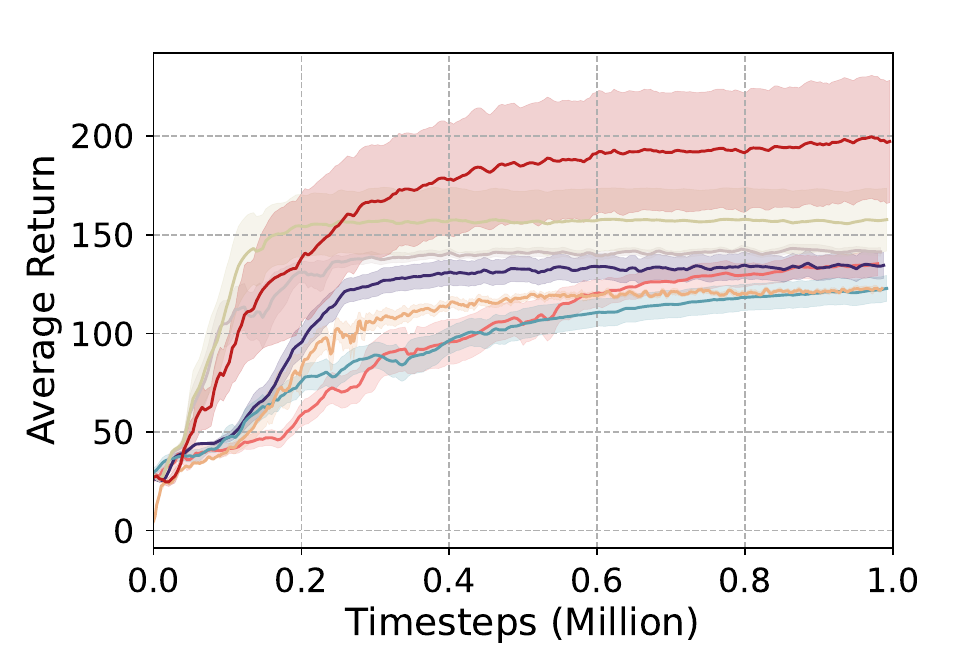}}
	\subfloat[Reacher]{\includegraphics[width=0.24\textwidth]{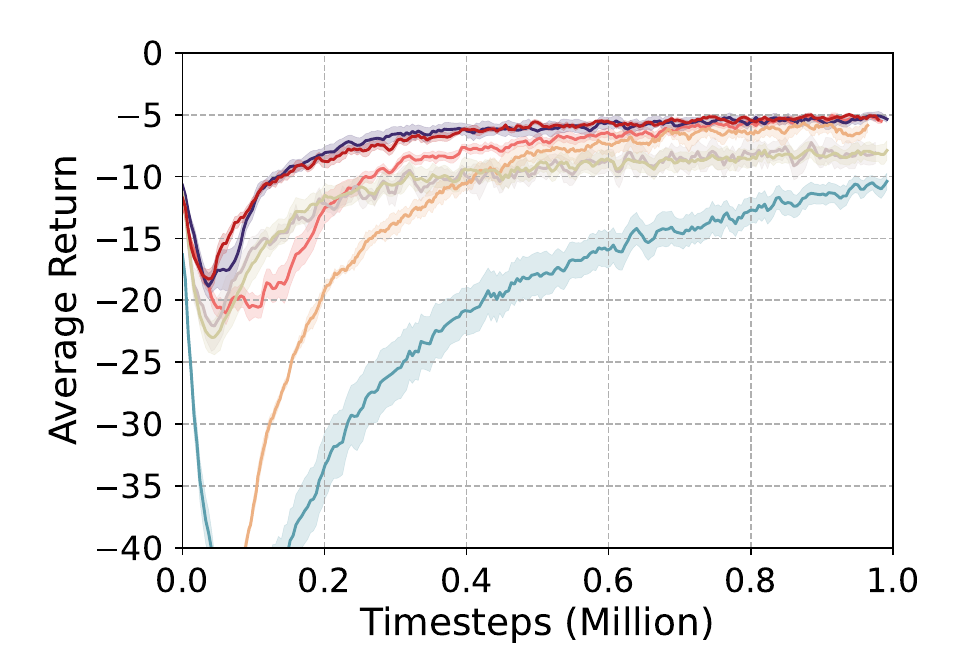}}
	\subfloat[Hopper]{\includegraphics[width=0.24\textwidth]{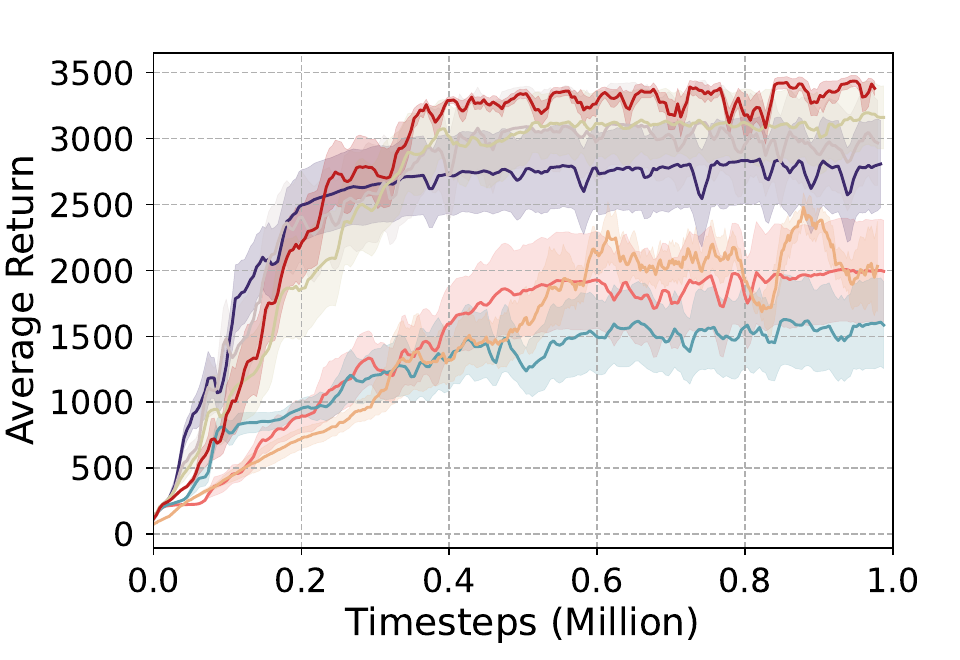}}\\
	\subfloat[HumanoidStandup]{\includegraphics[width=0.24\textwidth]{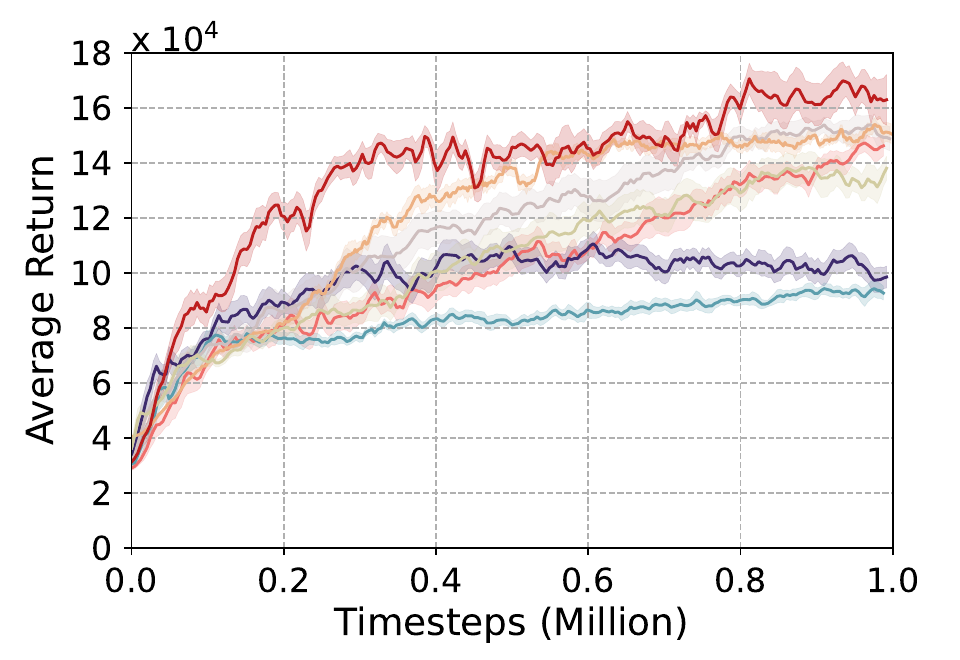}}
	\subfloat[Walker2d]{\includegraphics[width=0.24\textwidth]{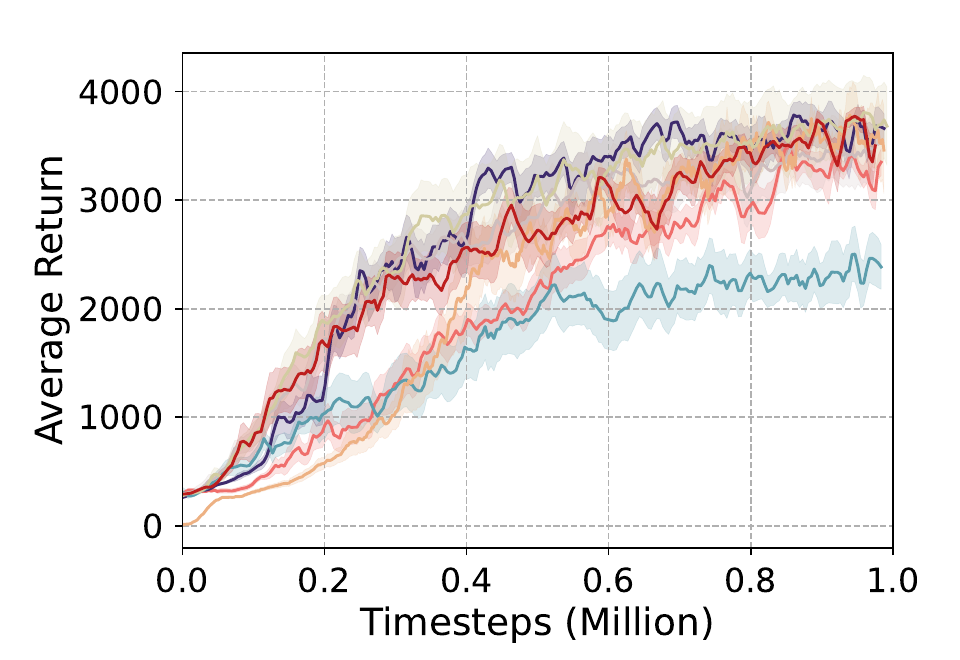}}
	\subfloat[Humanoid]{\includegraphics[width=0.24\textwidth]{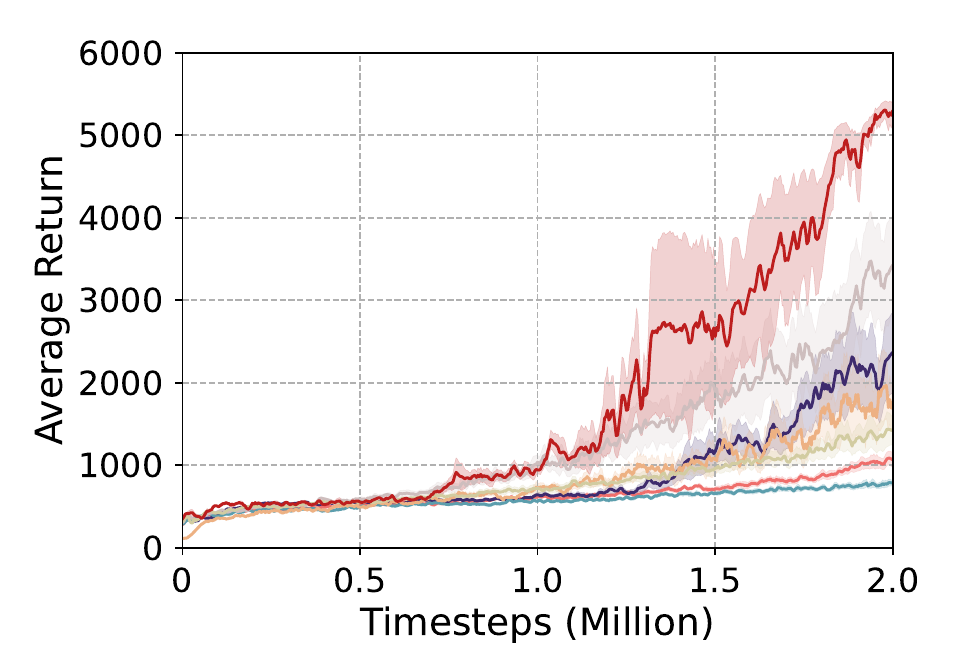}}
	\caption{Learning curves on the MuJoCo environments. Performance of \textit{ToPPO} vs. \textit{PPO}, \textit{OTRPO},  \textit{TRPO}, \textit{DISC}, \textit{OPPO}, and \textit{GePPO}.
		The shaded region indicates the standard deviation of ten random seeds.
		The X-axis represents the timesteps in the environment. 
%		The Y -axis represents the average return.
	}
	\label{performance}
\end{figure*}

In the discrete environments, we randomly chose some Atari games.
The results are averaged over three seeds during 25M timesteps. We run our experiments across three seeds with fair evaluation metrics. We use the same hyperparameters $ \epsilon=0.1 $ and do not fine-tune them. Since none of the off-policy versions of PPO are playing in Atari games, we only compare the experiment with PPO. From Figure \ref{performance_atari}, this shows that  on more complex environments, our method obtains better results using off-policy data. Therefore, the ToPPO of our proposed has better sample efficiency. 
Please refer to the appendix for some additional results (refer to Figure \ref{performance_atari1})).

\paragraph{Ablation Studies}
In addition, we also verify the necessity of the constraints of policy selection. According to Theorem \ref{IMPI}, constraints on the previous policy $ \mu $ and the current policy $ \pi_k $ can guarantee the monotonic improvement of policy performance. Figure \ref{Compa1} shows the results of the two comparisons (\textit{$ N=5 $} vs. \textit{$ N=5 $ NOT} and \textit{$ N=6 $} vs. \textit{$ N=6 $ NOT}). `\textit{NOT}' is that our method removes the constraint of selecting policies, and conducts experiments. Compared to \textit{$ N=5 $} vs. \textit{$ N=5 $ NOT} or \textit{$ N=6 $} vs. \textit{$ N=6 $ NOT}, we see that selecting a policy to update by constraints can improve the performance of the algorithm in most control environments. Moreover, one can see that ToPPO may achieve better results if we adjust the parameter $ N $, such as HalfCheetah.

\begin{wrapfigure}[]{r}{0.42\textwidth}
	%	\vspace{-10pt}
%	\vskip -5pt
	\centering
	\subfloat[HalfCheetah]{\includegraphics[width=0.2\textwidth]{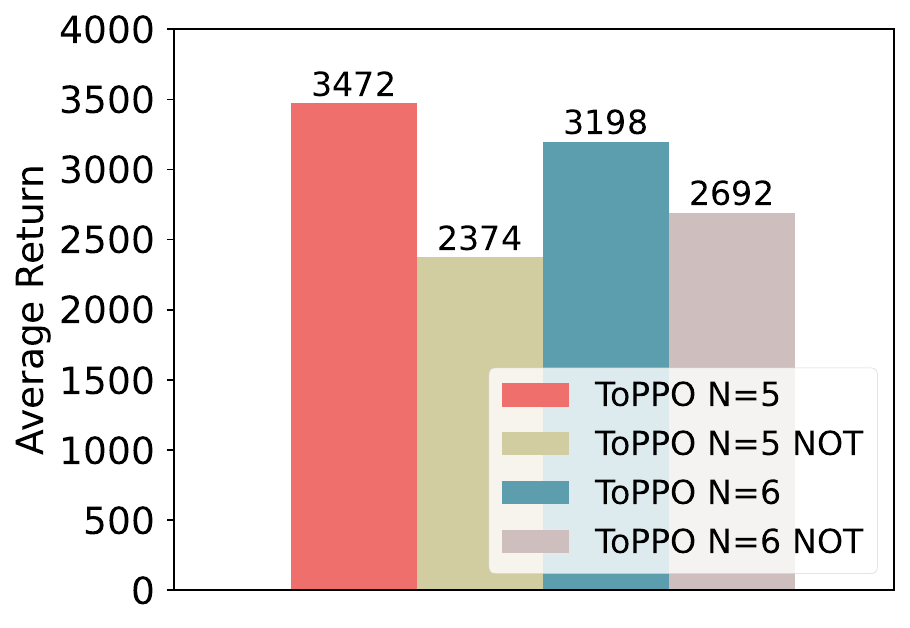}}
	\subfloat[HumanoidStandup]{\includegraphics[width=0.2\textwidth]{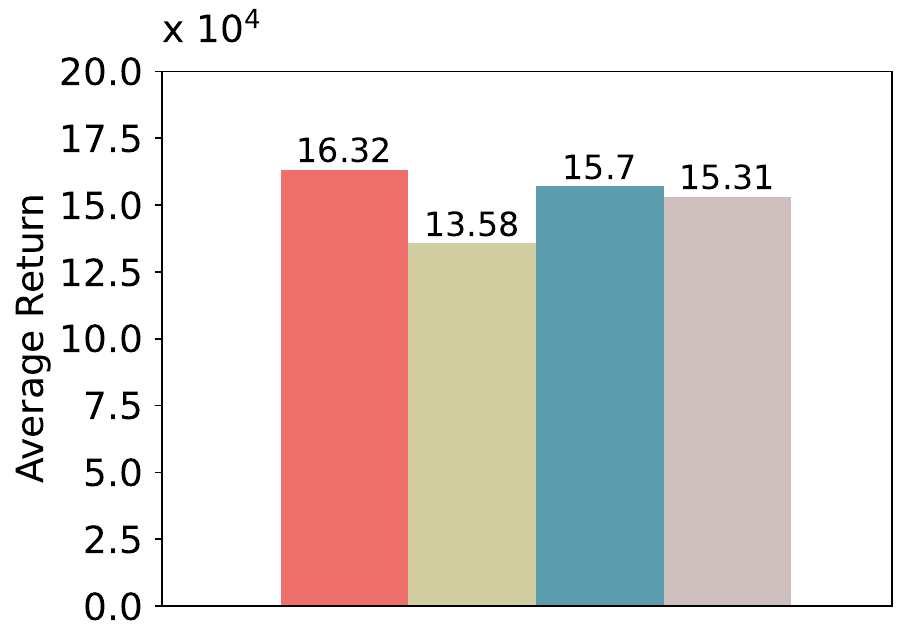}}
	\caption{Final performance of \textit{ToPPO} vs. \textit{ToPPO NOT} (remove the constraints of selecting policies)}
	\label{Compa1}
	\vspace{-5pt}
\end{wrapfigure}

\begin{figure*}[t]
	\centering
	\subfloat[BattleZone]{\includegraphics[width=0.24\textwidth]{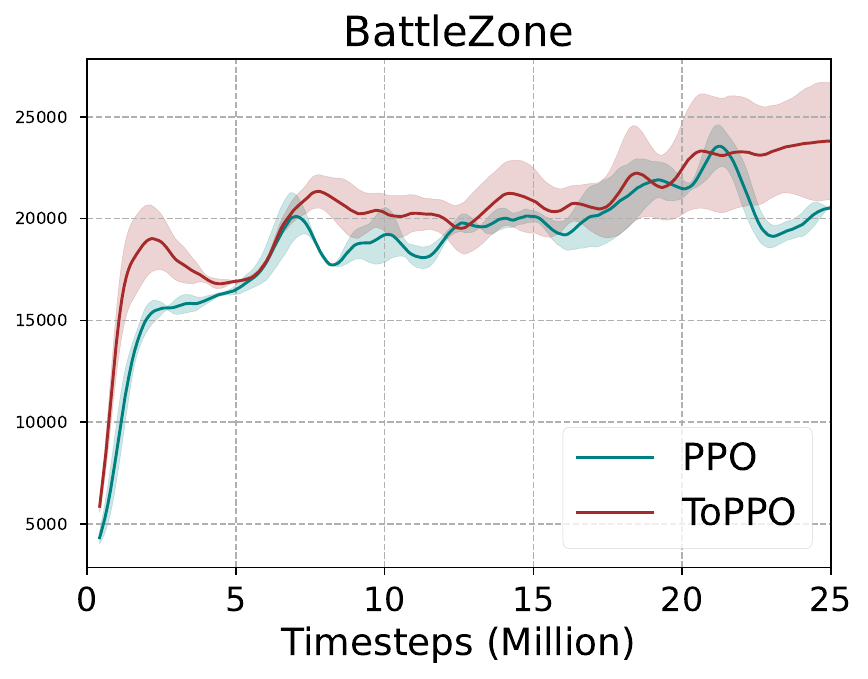}}
	\subfloat[Breakout]{\includegraphics[width=0.24\textwidth]{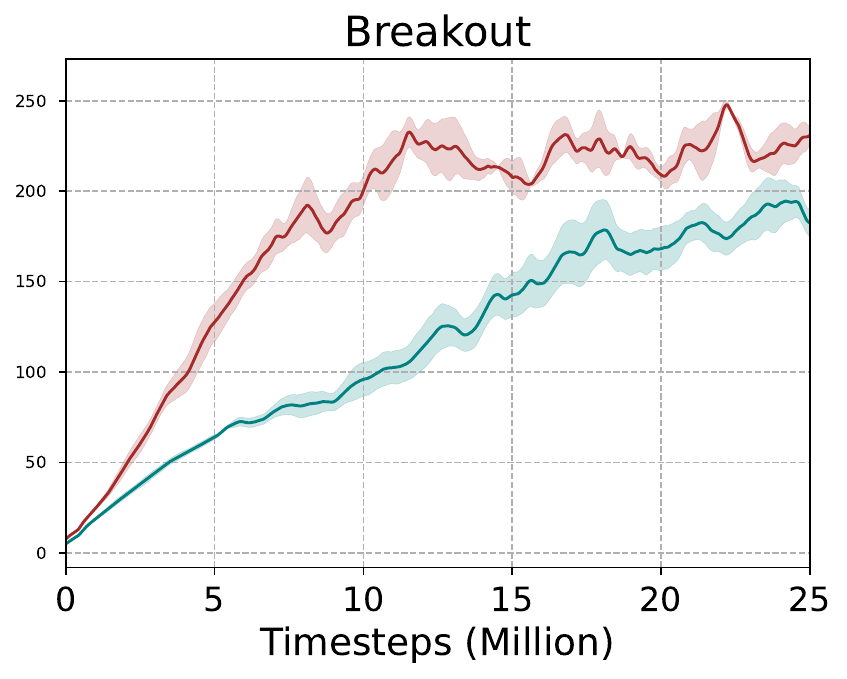}}
	\subfloat[Carnival]{\includegraphics[width=0.24\textwidth]{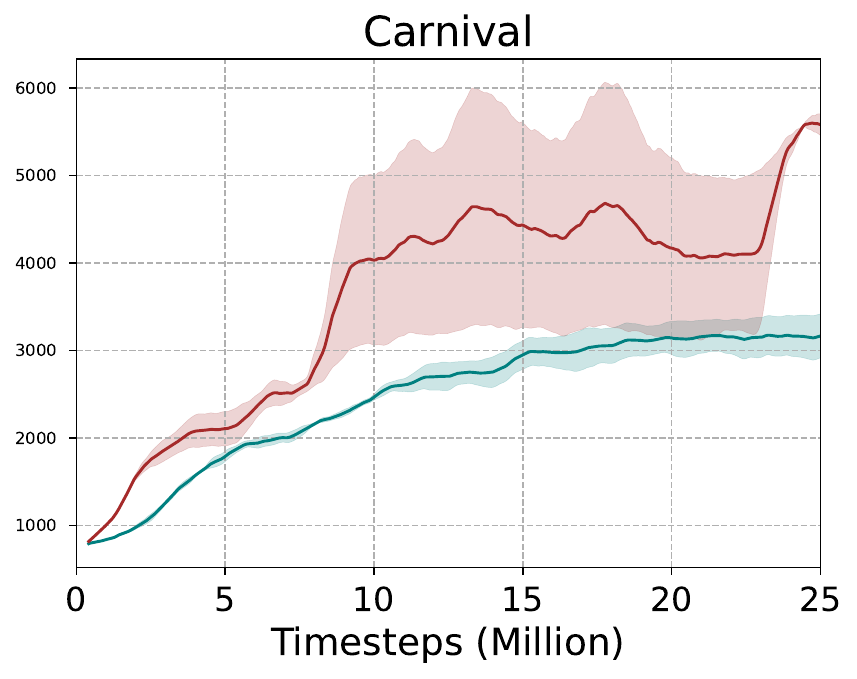}}
	\subfloat[CrazyClimber]{\includegraphics[width=0.24\textwidth]{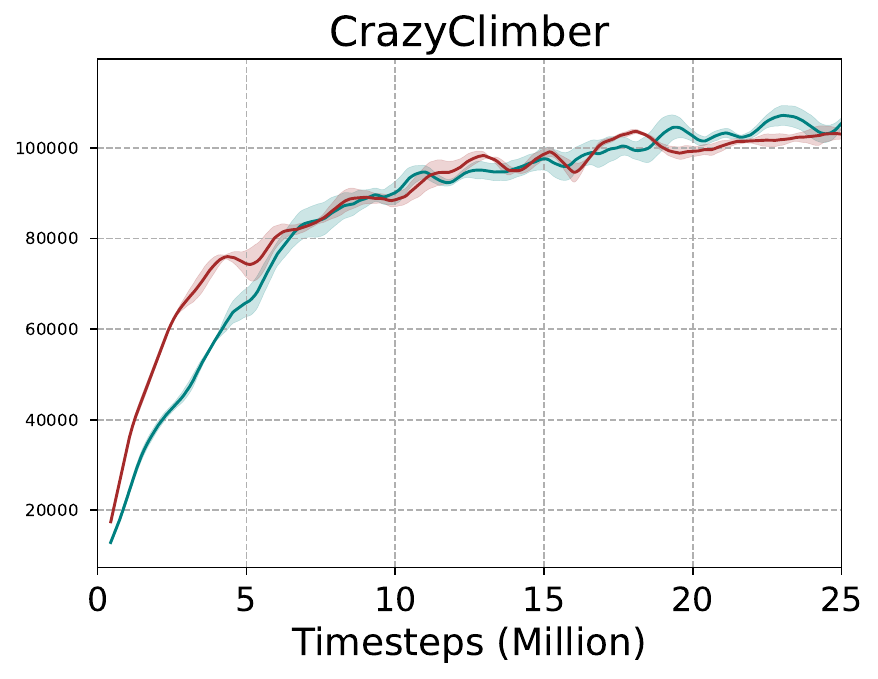}}\\
	\subfloat[Enduro]{\includegraphics[width=0.24\textwidth]{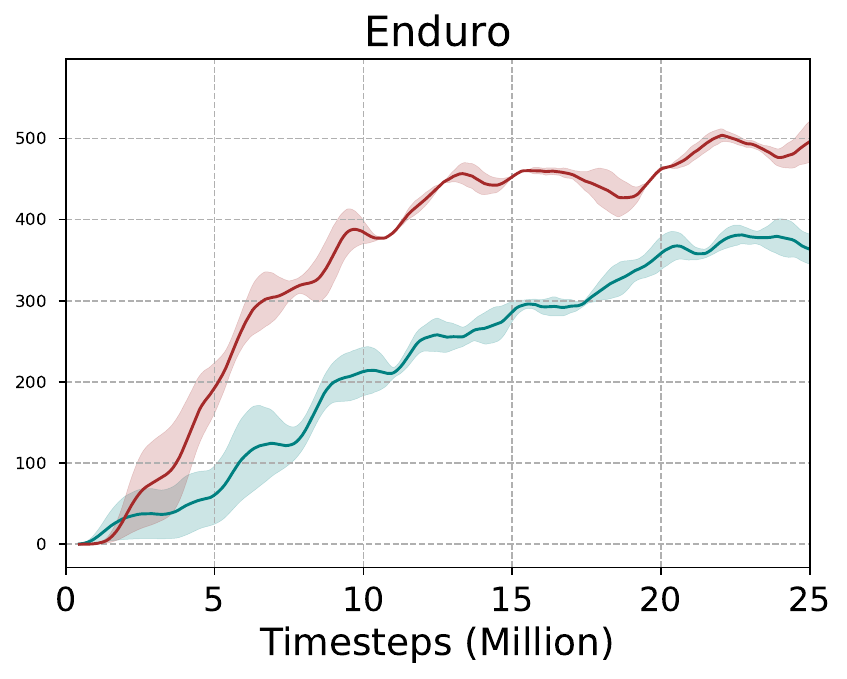}}
	\subfloat[FishingDerby]{\includegraphics[width=0.24\textwidth]{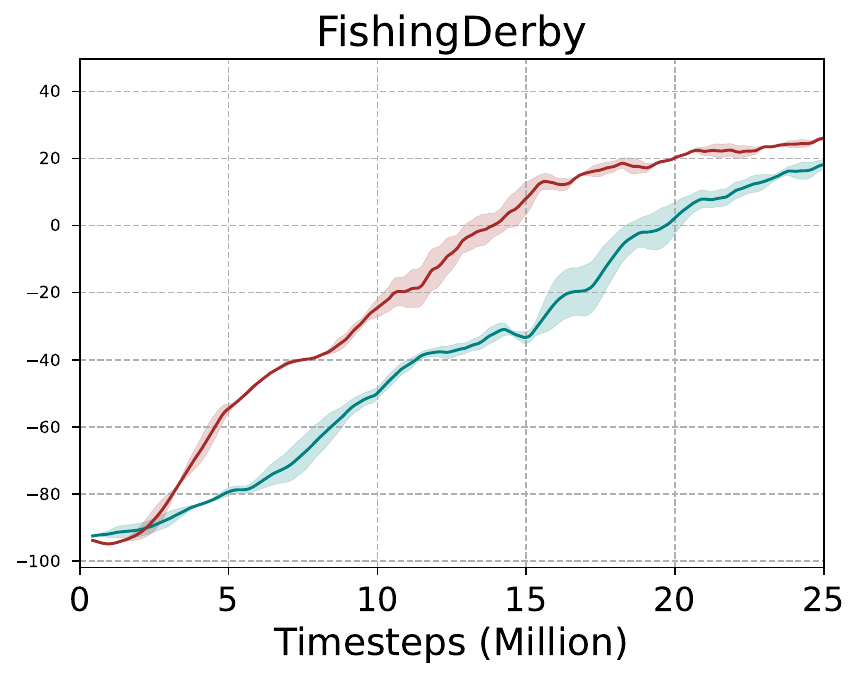}}
	\subfloat[Krull]{\includegraphics[width=0.24\textwidth]{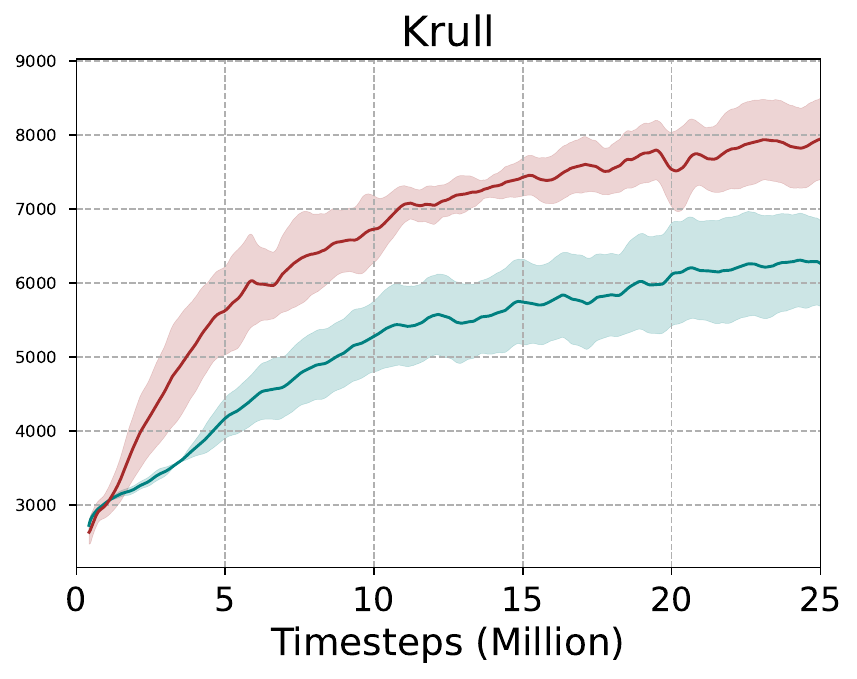}}
	\subfloat[NameThisGame]{\includegraphics[width=0.24\textwidth]{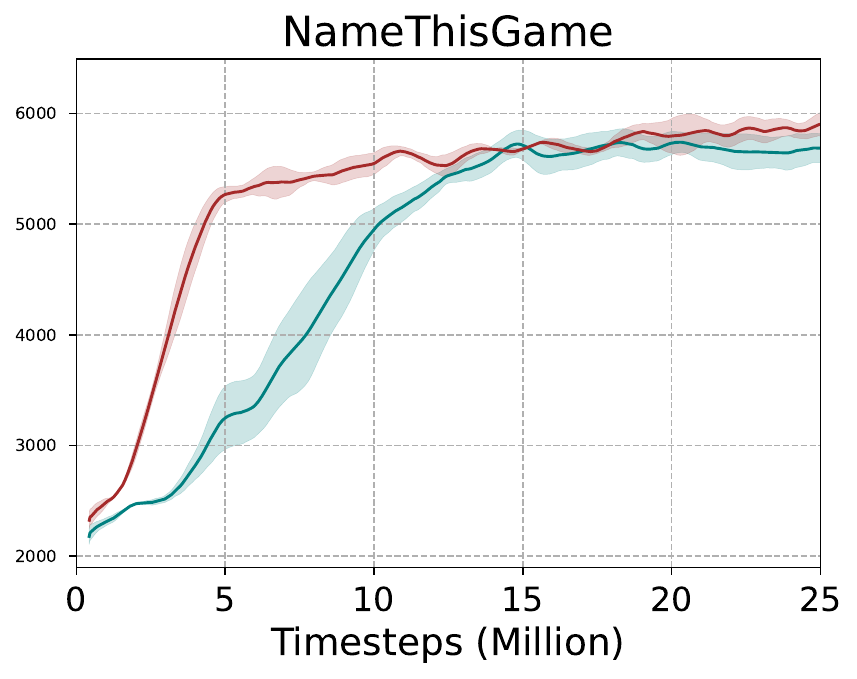}}
	\caption{Learning curves on the Atari environments. Performance of \textit{ToPPO} vs. \textit{PPO}.
		The shaded region indicates the standard deviation of three random seeds.
%		The X-axis represents the timesteps in the environment. The Y -axis represents the average return.
	}
	\label{performance_atari}
\end{figure*}

Next, we study the number of policies in $ M $ during the iteration. Due to removing the constraints of the policy selection, it is obvious that the length of $ M $ in the algorithm \textit{ToPPO $ N=5 $ NOT} is 5. If we keep the constraints of policy selection, the length of $ M $ changes dynamically, due to deleting the policy that does not satisfy the constraints. From Figure \ref{Compa}, we can see that the policy selection lead to dynamic changes in $ |M| $.  Therefore, policy selection plays an important role in the iterative processes.

We also experimentally compare the ToPPO and ToPPO \textit{adapt $ \epsilon $ } in the appendix. According to the Lemma \ref{epsi}, the $ \epsilon $ value is related to the $ N $ value. The value of $ N $ changes at each iteration, and so does $ \epsilon $. ToPPO \textit{adapt $ \epsilon $} represents an adaptive $ \epsilon $ value by the $ N $ value in each iteration. One can see that when $ \epsilon $ is fixed, the performance is better in most environments. The reason for this is that when $\epsilon$ is fixed, it may make the training progress more stable.

Finally, we compare the running times of the different algorithms under the same conditions. It is found that our method takes less time than GePPO and a little more time than PPO, this is because our method, although it does not use the V-trace technique, includes a policy selection step which takes some time. We also tested the effect of choosing a larger $N$ and $\alpha$ on performance. Surprisingly, better results were obtained in some environments, possibly because each environment is affected differently by the hyperparameters. Therefore, fine-tuning the hyperparameters of the algorithm will give better results (see the appendix for more details).

\section{Conclusion and Future Work}\label{conc}

In this paper, we propose a new method Transductive Off-policy PPO (ToPPO), which is a new surrogate objective function method to avoid the inaccuracy of the estimated advantage function of the current policy from off-policy data. Theoretical analysis reveals that the proposed algorithm can guarantee monotonic improvement under certain conditions. Furthermore, our method does not fully utilize off-policy data but selectively utilizes them and gives a theoretical explanation of the trick of the PPO algorithm.
Extensive experimental results show that our proposed method improves the performance compared with several closely related algorithms and also demonstrates the importance of policy selection. 

\paragraph{Limitation.} In this paper, we have found that varying the filter boundary $\alpha$ and the number of trajectories of previous policies $N$ affects the size of the set of selecting policies. And the proposed method performs well by selecting a fixed value of $ \alpha $ and $N$ from the candidate set. For each environment, fine-tuning the hyperparameters $\alpha$ and $N$ will give better results. Now, there is no good way to solve $ \alpha $, and how to dynamically adjust the value of $ \alpha $ in experiments is beneficial to the performance of the algorithm, which will be an interesting direction.

%\begin{ack}
%Use unnumbered first level headings for the acknowledgments. All acknowledgments
%go at the end of the paper before the list of references. Moreover, you are required to declare
%funding (financial activities supporting the submitted work) and competing interests (related financial activities outside the submitted work).
%More information about this disclosure can be found at: \url{https://neurips.cc/Conferences/2024/PaperInformation/FundingDisclosure}.
%
%
%Do {\bf not} include this section in the anonymized submission, only in the final paper. You can use the \texttt{ack} environment provided in the style file to automatically hide this section in the anonymized submission.
%\end{ack}

\bibliographystyle{plain}
\bibliography{ToPPO}

%%%%%%%%%%%%%%%%%%%%%%%%%%%%%%%%%%%%%%%%%%%%%%%%%%%%%%%%%%%%
\newpage
\appendix

\section{Appendix}
\subsection{Proof}
Let's start with some useful lemmas.
\begin{lemma}\label{Kak}
	\cite{Kak}
	Consider any two policies $ \tilde{\pi} $ and $ \pi $, we have 
	\begin{equation*}
	\eta(\tilde{\pi}) -\eta(\pi)
	=\frac{1}{1-\gamma}\mathbb{E}_{{s \sim \rho^{\tilde{\pi}},a \sim \tilde{\pi}}} A_{\pi}(s, a).
	\end{equation*}
\end{lemma}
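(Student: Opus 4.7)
The plan is to establish the identity by a standard telescoping argument built on the Bellman-type expansion of $A_\pi$, then repackage the resulting discounted sum as an expectation under the normalized visitation distribution $\rho^{\tilde\pi}$.

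First, I would rewrite the advantage in its single-step form using the Bellman relation for $V_\pi$: for any $(s,a)$,
\[
A_\pi(s,a) \;=\; Q_\pi(s,a) - V_\pi(s) \;=\; \mathbb{E}_{s'\sim P(\cdot|s,a)}\!\bigl[R(s,a) + \gamma V_\pi(s') - V_\pi(s)\bigr].
\]
This is the key algebraic observation: $A_\pi$ is exactly the one-step Bellman residual of $V_\pi$ under the dynamics, independent of the sampling policy. Nothing here uses $\tilde\pi$ yet, which is what lets $A_\pi$ be evaluated off-policy on $\tilde\pi$-trajectories.

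Next I would take a trajectory $\tau = (s_0,a_0,s_1,a_1,\ldots)$ generated by $\tilde\pi$ starting from $s_0\sim\rho_0$, multiply the above identity by $\gamma^t$, and sum over $t\ge 0$. Taking expectation under $\tau\sim\tilde\pi$ and using the tower rule over $s_{t+1}$, the sum telescopes:
\[
\mathbb{E}_{\tau\sim\tilde\pi}\!\left[\sum_{t=0}^{\infty}\gamma^t A_\pi(s_t,a_t)\right]
= \mathbb{E}_{\tau\sim\tilde\pi}\!\left[\sum_{t=0}^{\infty}\gamma^t R(s_t,a_t)\right] - \mathbb{E}_{s_0\sim\rho_0} V_\pi(s_0)
= \eta(\tilde\pi) - \eta(\pi),
\]
where in the last step I identify the first term with $\eta(\tilde\pi)$ by definition and the boundary term $\mathbb{E}_{s_0}V_\pi(s_0)$ with $\eta(\pi)$ (since $V_\pi(s_0)=\mathbb{E}_{\tau\sim\pi}[G_0\mid s_0]$ and $\eta(\pi)=\mathbb{E}_{s_0\sim\rho_0}V_\pi(s_0)$).

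Finally, I would rewrite the left-hand side using the normalized discounted state-visitation distribution $\rho^{\tilde\pi}$ introduced in Section \ref{Backg}. By definition $\rho^{\tilde\pi}(s) = (1-\gamma)\sum_{t=0}^\infty \gamma^t \mathbb{P}(s_t=s\mid \rho_0,\tilde\pi)$, so
\[
\mathbb{E}_{\tau\sim\tilde\pi}\!\left[\sum_{t=0}^{\infty}\gamma^t A_\pi(s_t,a_t)\right]
= \frac{1}{1-\gamma}\,\mathbb{E}_{s\sim\rho^{\tilde\pi},\,a\sim\tilde\pi}\bigl[A_\pi(s,a)\bigr].
\]
Combining the two displays yields the claimed identity. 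The one subtlety worth being careful about is the exchange of expectation and infinite sum in the telescoping step; since $|A_\pi|$ is bounded by $2\|R\|_\infty/(1-\gamma)$ and the $\gamma^t$ weights form a geometric series, dominated convergence (or a direct finite-horizon truncation followed by $T\to\infty$) justifies the swap. That bookkeeping step is the only nontrivial part; everything else is definitional unfolding.
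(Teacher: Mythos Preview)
Your proof is correct and is precisely the standard telescoping argument originally given by Kakade and Langford. The paper itself does not prove this lemma at all: it simply states it with a citation to \cite{Kak} and then uses it as a black box, so there is nothing further to compare.
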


\begin{cor}\label{Kak_cor}
	Consider any two policies $ \tilde{\pi} $ and $ \pi $, we have
	\begin{itemize}
		\item 
		$ V^{\tilde{\pi}}(s_0)-V^{\pi}(s_0)
		=\frac{1}{1-\gamma}\mathbb{E}_{{s \sim \rho^{\tilde{\pi}}(\cdot|s_0),a \sim \tilde{\pi}}} A_{\pi}(s, a). $
		
		\item 
		$ Q^{\tilde{\pi}}(s_0, a_0)-Q^{\pi}(s_0, a_0)
		=\frac{\gamma}{1-\gamma}\mathbb{E}_{{s \sim \rho^{\tilde{\pi}}(\cdot|s_0, a_0),a \sim \tilde{\pi}}} A_{\pi}(s, a) .$
	\end{itemize}
\end{cor}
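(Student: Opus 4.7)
The plan is to obtain the corollary by specializing and then chaining Lemma \ref{Kak} rather than redoing its telescoping argument from scratch. For the first bullet, I would interpret $\rho^{\tilde\pi}(\cdot\mid s_0)$ as the normalized discounted state-visitation distribution of $\tilde\pi$ when the initial distribution is the point mass $\delta_{s_0}$, i.e.\ $\rho^{\tilde\pi}(s\mid s_0)=(1-\gamma)\sum_{t=0}^{\infty}\gamma^{t}\,\mathbb{P}(s_t=s\mid s_0,\tilde\pi)$. With this reading, $V^{\tilde\pi}(s_0)-V^{\pi}(s_0)$ is exactly $\eta(\tilde\pi)-\eta(\pi)$ computed under $\rho_0=\delta_{s_0}$, so the first identity is Lemma \ref{Kak} applied with this degenerate initial distribution; no additional work is required beyond noting this specialization.

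For the second bullet, the natural route is to reduce $Q$ to $V$ at the next time step. Writing out the Bellman expansions of $Q^{\tilde\pi}$ and $Q^{\pi}$ at $(s_0,a_0)$, the common reward $R(s_0,a_0)$ cancels and the transition $P(\cdot\mid s_0,a_0)$ is shared, giving
\begin{equation*}
Q^{\tilde\pi}(s_0,a_0)-Q^{\pi}(s_0,a_0)=\gamma\,\mathbb{E}_{s_1\sim P(\cdot\mid s_0,a_0)}\!\bigl[V^{\tilde\pi}(s_1)-V^{\pi}(s_1)\bigr].
\end{equation*}
Applying the first bullet inside the expectation turns this into
\begin{equation*}
\frac{\gamma}{1-\gamma}\,\mathbb{E}_{s_1\sim P(\cdot\mid s_0,a_0)}\,\mathbb{E}_{s\sim\rho^{\tilde\pi}(\cdot\mid s_1),a\sim\tilde\pi}\bigl[A_{\pi}(s,a)\bigr].
\end{equation*}

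The remaining step is to collapse the two nested expectations into a single expectation over $\rho^{\tilde\pi}(\cdot\mid s_0,a_0)$. For this to match the statement, I would take $\rho^{\tilde\pi}(\cdot\mid s_0,a_0)$ to be the normalized discounted visitation starting from $s_0$, forcing action $a_0$ at time zero, and following $\tilde\pi$ thereafter, reindexed from time one; explicitly $\rho^{\tilde\pi}(s\mid s_0,a_0)=(1-\gamma)\sum_{t=0}^{\infty}\gamma^{t}\,\mathbb{P}(s_{t+1}=s\mid s_0,a_0,\tilde\pi)$. Under this convention, a change-of-variables argument (marginalizing $s_1$ against the kernel $P(\cdot\mid s_0,a_0)$ and shifting the time index by one) identifies $\mathbb{E}_{s_1}[\rho^{\tilde\pi}(\cdot\mid s_1)]$ with $\rho^{\tilde\pi}(\cdot\mid s_0,a_0)$, completing the proof.

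The only real obstacle is notational rather than mathematical: the paper does not explicitly define the conditional visitation distributions $\rho^{\tilde\pi}(\cdot\mid s_0)$ and $\rho^{\tilde\pi}(\cdot\mid s_0,a_0)$, so the argument hinges on committing to the conventions above and verifying that the bookkeeping of the $\gamma$ factor matches the shift by one time step inherent in conditioning on the initial action. Once that is fixed, both bullets reduce to one-line consequences of Lemma \ref{Kak} plus a single Bellman expansion.
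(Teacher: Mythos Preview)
Your proposal is correct and matches the paper's own proof essentially line for line: the paper dispatches the first bullet with the one-line remark that $\eta(\pi)=\mathbb{E}_{s_0\sim\rho_0}V^{\pi}(s_0)$ (i.e.\ take $\rho_0=\delta_{s_0}$), and for the second bullet it performs the same Bellman expansion $Q^{\tilde\pi}-Q^{\pi}=\gamma\,\mathbb{E}_{s'\sim P(\cdot\mid s_0,a_0)}[V^{\tilde\pi}(s')-V^{\pi}(s')]$, applies the first bullet inside the expectation, and then collapses the nested expectation into $\rho^{\tilde\pi}(\cdot\mid s_0,a_0)$. Your write-up is in fact more careful than the paper's, since you spell out the conventions for $\rho^{\tilde\pi}(\cdot\mid s_0)$ and $\rho^{\tilde\pi}(\cdot\mid s_0,a_0)$ that the paper leaves implicit.
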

\begin{proof}
	The first formula is simple, due to $ \eta(\pi)=\mathbb{E}_{s_0\sim\rho_0} V^{\pi}(s_0) $.
	
	Let's proof the second formula.
	\begin{align*}
	&Q^{\tilde{\pi}}(s_0, a_0)-Q^{\pi}(s_0, a_0)\\
	=&\gamma\mathbb{E}_{s'\sim P(s'|s_0, a_0)}\left[V^{\tilde{\pi}}(s')-
	V^{\pi}(s')\right]\\
	=&\frac{\gamma}{1-\gamma}\mathbb{E}_{s'\sim P(s'|s_0, a_0)}\mathbb{E}_{{s \sim \rho^{\tilde{\pi}}(\cdot|s'),a \sim \tilde{\pi}}} A_{\pi}(s, a)\\
	=&\frac{\gamma}{1-\gamma}\mathbb{E}_{{s \sim \rho^{\tilde{\pi}}(\cdot|s_0, a_0),a \sim \tilde{\pi}}} A_{\pi}(s, a). 
	\end{align*}
\end{proof}

\begin{lemma}\label{CPO}
	\cite{Ach}
	Consider two normalized discount state visitation distribution $ \rho^{\tilde{\pi}} $ and $ \rho^{\pi} $, we have
	\begin{align*}
	\|\rho^{\tilde{\pi}}-\rho^{\pi} \|_1
	\leq \frac{\gamma}{1-\gamma} \mathbb{E}_{s\sim\rho^{\pi}}\|\tilde{\pi}-\pi\|_1(s).
	\end{align*}
\end{lemma}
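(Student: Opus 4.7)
The plan is to prove the bound by writing the state visitation distributions in closed form as resolvents of the policy-induced transition operators and then applying a telescoping identity. Define the Markov transition operator $P_{\pi}(s'\mid s) = \sum_a \pi(a\mid s)\, P(s'\mid s,a)$ acting on signed measures by $(P_{\pi}\nu)(s') = \sum_s P_{\pi}(s'\mid s)\nu(s)$. From the definition $\rho^{\pi}(s) = (1-\gamma)\sum_{t\geq 0}\gamma^t \mathbb{P}(s_t=s\mid \rho_0,\pi)$ one has the clean identity $\rho^{\pi} = (1-\gamma)(I-\gamma P_{\pi})^{-1}\rho_0$, and similarly for $\tilde\pi$.

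The second step is the standard resolvent identity $A^{-1} - B^{-1} = A^{-1}(B-A)B^{-1}$ applied to $A = I - \gamma P_{\tilde\pi}$ and $B = I - \gamma P_{\pi}$. This yields
\begin{equation*}
\rho^{\tilde\pi} - \rho^{\pi} = \gamma(1-\gamma)(I-\gamma P_{\tilde\pi})^{-1}(P_{\tilde\pi}-P_{\pi})(I-\gamma P_{\pi})^{-1}\rho_0 = \gamma(I-\gamma P_{\tilde\pi})^{-1}(P_{\tilde\pi}-P_{\pi})\rho^{\pi},
\end{equation*}
where in the last step I absorb $(1-\gamma)(I-\gamma P_{\pi})^{-1}\rho_0$ back into $\rho^{\pi}$. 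This rewriting is the crux: the right-hand side now involves $P_{\tilde\pi}-P_{\pi}$ acting on $\rho^{\pi}$, which is exactly what I want to bound by $\mathbb{E}_{s\sim\rho^{\pi}}\|\tilde\pi-\pi\|_1(s)$.

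The third step is taking $\|\cdot\|_1$ and bounding each factor. Because $P_{\tilde\pi}$ is a stochastic kernel, $\|P_{\tilde\pi}\nu\|_1\leq \|\nu\|_1$ for any signed measure $\nu$ (triangle inequality on $(P_{\tilde\pi}\nu)(s')$), so by the Neumann series the induced $L^1$ operator norm satisfies $\|(I-\gamma P_{\tilde\pi})^{-1}\|_{1\to 1} \leq \sum_{t\geq 0}\gamma^t = \tfrac{1}{1-\gamma}$. For the remaining factor I expand
\begin{equation*}
\|(P_{\tilde\pi}-P_{\pi})\rho^{\pi}\|_1 \leq \sum_s \rho^{\pi}(s)\sum_{s'}\bigl|P_{\tilde\pi}(s'\mid s)-P_{\pi}(s'\mid s)\bigr|,
\end{equation*}
and then use $P_{\tilde\pi}(s'\mid s)-P_{\pi}(s'\mid s) = \sum_a [\tilde\pi(a\mid s)-\pi(a\mid s)]P(s'\mid s,a)$ together with $\sum_{s'}P(s'\mid s,a)=1$ to collapse the $s'$-sum, giving $\sum_{s'}|P_{\tilde\pi}(s'\mid s)-P_{\pi}(s'\mid s)| \leq \|\tilde\pi-\pi\|_1(s)$. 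Chaining the two bounds produces the claimed $\frac{\gamma}{1-\gamma}\mathbb{E}_{s\sim\rho^{\pi}}\|\tilde\pi-\pi\|_1(s)$.

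The proof is largely bookkeeping; the only delicate point is justifying the operator-norm bound $\|(I-\gamma P_{\tilde\pi})^{-1}\|_{1\to 1}\leq (1-\gamma)^{-1}$ cleanly in the discrete/continuous setting without formal measure-theoretic overhead, which I would handle by noting that the $L^1$ contractivity of a stochastic kernel on signed measures is standard and follows from the same triangle-inequality step used above.
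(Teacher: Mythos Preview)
Your proof is correct. Note that the paper does not supply its own proof of this lemma: it is stated with a citation to \cite{Ach} (Achiam et al., ``Constrained Policy Optimization'') and used as a black box. Your argument---writing $\rho^{\pi} = (1-\gamma)(I-\gamma P_{\pi})^{-1}\rho_0$, applying the resolvent identity to obtain $\rho^{\tilde\pi}-\rho^{\pi} = \gamma(I-\gamma P_{\tilde\pi})^{-1}(P_{\tilde\pi}-P_{\pi})\rho^{\pi}$, and then bounding the two factors via the $L^1$ contractivity of stochastic kernels and the collapse of the $s'$-sum---is exactly the proof given in that reference, so there is nothing to compare.
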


\begin{lemma}\label{pi_A_pi_k}
	Consider any two policies $ \tilde{\pi} $ and $ \pi $, and a advantage function $ A^{\pi} $, we have
	\begin{equation*}
	\|\mathbb{E}_{a \sim \tilde{\pi}} A^{\pi}(s, a)\|_{\infty}\leq
	\max_s \|\tilde{\pi}-\pi\|_1(s)\cdot \|A^{\pi}(s,a)\|_{\infty}.
	\end{equation*}
\end{lemma}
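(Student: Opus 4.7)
The plan is to exploit the fundamental identity $\sum_a \pi(a|s) A^\pi(s,a) = 0$ (already noted in the Preliminaries) to convert the expectation under $\tilde{\pi}$ into an expectation against the \emph{difference} of the two policy distributions. This is the crucial observation: although $\mathbb{E}_{a \sim \tilde{\pi}} A^\pi(s,a)$ involves only $\tilde{\pi}$, its value depends only on how $\tilde{\pi}$ differs from $\pi$, since contracting $A^\pi$ against $\pi$ itself produces zero.

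First, I would fix an arbitrary state $s$ and rewrite
\begin{equation*}
\mathbb{E}_{a \sim \tilde{\pi}} A^\pi(s,a) \;=\; \sum_a \tilde{\pi}(a|s)\, A^\pi(s,a) \;=\; \sum_a \bigl(\tilde{\pi}(a|s) - \pi(a|s)\bigr)\, A^\pi(s,a),
\end{equation*}
using the zero-mean property of $A^\pi$ under $\pi$. Next, I would apply Hölder's inequality (the standard $\ell_1$/$\ell_\infty$ pairing) to obtain
\begin{equation*}
\Bigl|\sum_a \bigl(\tilde{\pi}(a|s) - \pi(a|s)\bigr)\, A^\pi(s,a)\Bigr| \;\le\; \|\tilde{\pi}(\cdot|s) - \pi(\cdot|s)\|_1 \cdot \max_a |A^\pi(s,a)|.
\end{equation*}

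Finally, I would take the supremum over $s$ on both sides, upper-bounding $\max_a |A^\pi(s,a)|$ by the global quantity $\|A^\pi(s,a)\|_\infty := \max_{s,a}|A^\pi(s,a)|$ and $\|\tilde{\pi}(\cdot|s) - \pi(\cdot|s)\|_1$ by $\max_s \|\tilde{\pi} - \pi\|_1(s)$. Combining these yields the claimed bound. There is no real obstacle here: the only substantive idea is the centering trick in the first step, after which the inequality is a direct application of Hölder. I would just need to be careful that the notation $\|\tilde{\pi}-\pi\|_1(s)$ in the statement refers to $\sum_a|\tilde{\pi}(a|s)-\pi(a|s)|$, which is exactly the quantity produced by Hölder's inequality applied state-wise.
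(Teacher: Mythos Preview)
Your proposal is correct and follows essentially the same argument as the paper: subtract the zero quantity $\mathbb{E}_{a\sim\pi}A^{\pi}(s,a)$ to rewrite the expectation as an integral against $(\tilde{\pi}-\pi)$, then apply the $\ell_1/\ell_\infty$ bound and take the supremum over $s$. The paper's proof is slightly terser (it does not name H\"older explicitly), but the steps are identical.
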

\begin{proof}
	For any $ s $, we have $ \mathbb{E}_{a\sim\pi}A^{\pi}(s, a)=0 $.
	So
	\begin{align*}
	&\|\mathbb{E}_{a \sim \tilde{\pi}} A^{\pi}(s, a)\|_{\infty}\\
	=&\|\mathbb{E}_{a \sim \tilde{\pi}} A^{\pi}(s, a)-\mathbb{E}_{a \sim \pi} A^{\pi}(s, a)\|_{\infty}\\
	=& \|\int_{a\sim \mathcal{A}}(\tilde{\pi}-\pi)A^{\pi}(s, a) da\|_{\infty}\\
	\leq& \max_s \|\tilde{\pi}-\pi\|_1(s)\cdot \|A^{\pi}(s,a)\|_{\infty}.
	\end{align*}
\end{proof}

\textbf{Lemma 2.1.}
(Policy Improvement Lower Bound) Consider a current policy $ \pi_{k} $, and any policies $ \pi $ and $ \mu $, we have
\begin{equation*}
\eta(\pi)-\eta(\pi_k)\geq
\frac{1}{1-\gamma}\mathbb{E}_{(s,a)\sim\rho^{\mu}}\left[\frac{\pi(a|s)}{\mu(a|s)}A^{\pi_k}(s,a)\right]-
\frac{4\epsilon\gamma}{(1-\gamma)^2}\delta_{\max}^{\pi_k, \pi}\cdot\delta^{\pi, \mu},
\end{equation*}
where $\epsilon=\max _{s, a}\left| A^{\pi_k}(s, a)\right|$, $\delta^{\pi, \mu}=\mathbb{E}_{s \sim \rho^{\mu}} \mathbb{D}_{\mathcal{T} \mathcal{V}}(\mu, \pi)(s)$, and $ \delta_{\max}^{\pi_k, \pi} = \max_s \mathbb{D}_{\mathcal{T} \mathcal{V}}(\pi_k, \pi)(s)$. $ \mathbb{D}_{\mathcal{T} \mathcal{V}}(\pi_1, \pi_2)(s) = \frac{1}{2}\sum_{a}|\pi_1(a|s)-\pi_2(a|s)|$ represents the total variation distance (TV) between $ \pi_1(a|s) $ and $ \pi_2(a|s) $ at every state $ s $.
\begin{proof}
	According to lemma \ref{Kak}, we have
	\begin{align*}
	\eta(\pi) -\eta(\pi_k)
	=&\frac{1}{1-\gamma}\mathbb{E}_{{s \sim \rho^{\pi},a \sim \pi}} A^{\pi}(s, a)\\
	=& \frac{1}{1-\gamma}\left(\mathbb{E}_{{s \sim \rho^{\mu},a \sim \pi}} A_{\pi}(s, a) + \int_{s}(\rho^{\pi}-\rho^{\mu}) \mathbb{E}_{a \sim \pi}A^{\pi_k}(s, a)ds\right)\\
	\stackrel{\text { def }\frac{1}{q}+\frac{1}{p}=1}{\geq} &\frac{1}{1-\gamma}\mathbb{E}_{{s \sim \rho^{\mu},a \sim \pi}} A^{\pi_k}(s, a)-\frac{1}{1-\gamma}\|\rho^{\mu}-\rho^{\pi}\|_q\|\mathbb{E}_{a \sim \pi} A^{\pi_k}(s, a)\|_p\\
	\stackrel{\text{when } q=1, p=\infty}{=}&\frac{1}{1-\gamma}\mathbb{E}_{{s \sim \rho^{\mu},a \sim \pi}} A^{\pi_k}(s, a)-\frac{1}{1-\gamma}\|\rho^{\mu}-\rho^{\pi}\|_1\|\mathbb{E}_{a \sim \pi} A^{\pi_k}(s, a)\|_{\infty}.
	\end{align*}
	From lemma \ref{CPO}, lemma \ref{pi_A_pi_k}, and $ \|\tilde{\pi}-\pi\|_1= 2\mathbb{D}_{\mathcal{T} \mathcal{V}}(\pi_1, \pi_2)$, we have
	\begin{align*}
	&\eta(\pi) -\eta(\pi_k)\\
	\geq&\frac{1}{1-\gamma}\mathbb{E}_{{s \sim \rho^{\mu},a \sim \pi}} A^{\pi_k}(s, a)-\frac{1}{1-\gamma}\|\rho^{\mu}-\rho^{\pi}\|_1\|\mathbb{E}_{a \sim \pi} A^{\pi_k}(s, a)\|_{\infty}\\
	\geq & \frac{1}{1-\gamma}\mathbb{E}_{{s \sim \rho^{\mu},a \sim \pi}} A^{\pi_k}(s, a)-\frac{1}{1-\gamma} \frac{\gamma}{1-\gamma} \mathbb{E}_{s\sim\rho^{\mu}}\|\mu-\pi\|_1(s)\max_s \|\pi_k-\pi\|_1(s)\cdot \|A^{\pi_k}(s,a)\|_{\infty}\\
	\geq&
	\frac{1}{1-\gamma}\mathbb{E}_{(s,a)\sim\rho^{\mu}}\left[\frac{\pi(a|s)}{\mu(a|s)}A^{\pi_k}(s,a)\right]-
	\frac{4\epsilon\gamma}{(1-\gamma)^2}\delta_{\max}^{\pi_k, \pi}\cdot\delta^{\pi, \mu},
	\end{align*}
	where $\epsilon=\max _{s, a}\left| A^{\pi_k}(s, a)\right|$, $\delta^{\pi, \mu}=\mathbb{E}_{s \sim \rho^{\mu}} \mathbb{D}_{\mathcal{T} \mathcal{V}}(\mu, \pi)(s)$, and $ \delta_{\max}^{\pi_k, \pi} = \max_s \mathbb{D}_{\mathcal{T} \mathcal{V}}(\pi_k, \pi)(s)$. 
\end{proof}

\textbf{Lemma 3.1}
(Lower Bound) Consider a current policy $ \pi_{k} $, and any policies $ \pi $ and $ \mu $, we have
\begin{equation}
\eta(\pi)-\eta(\pi_k)\geq
\mathcal{L}_{\mu}(\pi)-
\frac{2(1+\gamma)\epsilon}{(1-\gamma)^2}
\delta_{\max}^{\mu, \pi_k}-
\frac{4\epsilon\gamma}{(1-\gamma)^2}
\delta_{\max}^{\mu, \pi}\cdot\delta^{\mu, \pi},
\end{equation}
where $\epsilon=\max _{s, a}\left| A^{\mu}(s, a)\right|$, $\delta^{\mu, \pi}=\mathbb{E}_{s \sim \rho^{\mu}} \mathbb{D}_{\mathcal{T} \mathcal{V}}(\mu, \pi)(s)$, 
$ \delta_{\max}^{\mu, \pi} = \max_s \mathbb{D}_{\mathcal{T} \mathcal{V}}(\mu, \pi)(s)$, and
$ \delta_{\max}^{\mu, \pi_k} = \max_s \mathbb{D}_{\mathcal{T} \mathcal{V}}(\mu, \pi_k)(s)$. 
$ \mathbb{D}_{\mathcal{T} \mathcal{V}}(\pi_1, \pi_2)(s) = \frac{1}{2}\sum_{a}|\pi_1(a|s)-\pi_2(a|s)|$ represents the total variation distance (TV) between $ \pi_1(a|s) $ and $ \pi_2(a|s) $ at every state $ s $.
\begin{proof}
	According to lemma \ref{Kak}, we have
	\begin{align}\label{LB_1}
	\begin{aligned}
	&\eta(\pi) -\eta(\pi_k)\\
	=&\frac{1}{1-\gamma}\mathbb{E}_{{s \sim \rho^{\pi},a \sim \pi}} A_{\pi_k}(s, a)\\
	=&\frac{1}{1-\gamma}\mathbb{E}_{{s \sim \rho^{\pi},a \sim \pi}} A_{\pi_k}(s, a) + \frac{1}{1-\gamma}\mathbb{E}_{{s \sim \rho^{\mu},a \sim \pi}} A_{\mu}(s, a) - \frac{1}{1-\gamma}\mathbb{E}_{{s \sim \rho^{\mu},a \sim \pi}} A_{\mu}(s, a)\\
	=&\frac{1}{1-\gamma}\mathbb{E}_{{s \sim \rho^{\mu},a \sim \pi}} A_{\mu}(s, a) + \frac{1}{1-\gamma}\mathbb{E}_{{s \sim \rho^{\pi},a \sim \pi}} A_{\pi_k}(s, a) - \frac{1}{1-\gamma}\mathbb{E}_{{s \sim \rho^{\pi},a \sim \pi}} A_{\mu}(s, a)\\
	&+ \frac{1}{1-\gamma}\mathbb{E}_{{s \sim \rho^{\pi},a \sim \pi}} A_{\mu}(s, a)-\frac{1}{1-\gamma}\mathbb{E}_{{s \sim \rho^{\mu},a \sim \pi}} A_{\mu}(s, a)\\
	\triangleq& \frac{1}{1-\gamma}\mathbb{E}_{{s \sim \rho^{\mu},a \sim \pi}} A_{\mu}(s, a) + \frac{1}{1-\gamma}\Phi_1 +\frac{1}{1-\gamma}\Phi_2,
	\end{aligned}
	\end{align}
	where $ \Phi_1= \mathbb{E}_{{s \sim \rho^{\pi},a \sim \pi}} A_{\pi_k}(s, a) - \mathbb{E}_{{s \sim \rho^{\pi},a \sim \pi}} A_{\mu}(s, a)$, and $ \Phi_2 =\mathbb{E}_{{s \sim \rho^{\pi},a \sim \pi}} A_{\mu}(s, a)-\mathbb{E}_{{s \sim \rho^{\mu},a \sim \pi}} A_{\mu}(s, a)$.
	
	Next, we prove that $ \Phi_1 $ and $ \Phi_2 $ are bounded, respectively.
	
	According to corollary \ref{Kak_cor} and lemma \ref{pi_A_pi_k}, we have
	\begin{align}\label{LB_2}
	\begin{aligned}
	|\Phi_1|&= |\mathbb{E}_{{s \sim \rho^{\pi},a \sim \pi}} A_{\pi_k}(s, a) - \mathbb{E}_{{s \sim \rho^{\pi},a \sim \pi}} A_{\mu}(s, a)|\\
	&\leq \left\|A^{\pi_k}(s, a)-A^{\mu}(s,a)\right\|_{\infty}\\
	&\leq\left\|Q^{\pi_k}(s, a)-Q^{\mu}(s, a)\right\|_{\infty}+\left\|V^{\pi_k}(s)-V^{\mu}(s)\right\|_{\infty}\\
	&\leq \frac{1+\gamma}{1-\gamma}\left\|\mathbb{E}_{\pi_k}A^{\mu}(s, a)\right\|_{\infty}\\
	& \leq \frac{1+\gamma}{1-\gamma} \max_s \|\pi_k-\mu\|_1(s)\cdot \|A^{\mu}(s,a)\|_{\infty}.
	\end{aligned}
	\end{align}
	According to the proof of the lemma 2.1, we have
	\begin{align}\label{LB_3}
	\begin{aligned}
	|\Phi_2| 
	=|\mathbb{E}_{{s \sim \rho^{\pi},a \sim \pi}} A_{\mu}(s, a)-\mathbb{E}_{{s \sim \rho^{\mu},a \sim \pi}} A_{\mu}(s, a)|
	\leq \frac{4\epsilon\gamma}{1-\gamma}\delta_{\max}^{\mu, \pi}\cdot\delta^{\mu, \pi}.
	\end{aligned}
	\end{align}
	Combining Eqn.(\ref{LB_1}), Eqn.(\ref{LB_2}) and Eqn.(\ref{LB_3}), we can get this conclusion.
\end{proof}

\textbf{Theorem 3.1}
(Monotonic Improvement)
Consider the current policy $ \pi_k $, define
\begin{equation*}
F_{\pi_{k}}(\pi)=\mathcal{L}_{\pi_{k}}(\pi)-\frac{4\epsilon\gamma}{(1-\gamma)^2}\delta_{\max}^{\pi_k, \pi}\cdot\delta^{\pi_k, \pi}.
\end{equation*}
Assume $ \pi_{k+1}= \arg\max_{\pi} F_{\pi_{k}}(\pi)$ exists and satisfies $ F_{\pi_{k}}(\pi_{k+1})>0 $, there exists policy $ \mu $ and constant $ \alpha>0 $ that satisfies $ d(\mu, \pi_{k})<\alpha $, then
\begin{equation*}
\mathcal{L}_{\mu}(\pi_{k+1})-
\frac{2(1+\gamma)\epsilon}{(1-\gamma)^2}
\delta_{\max}^{\mu, \pi_k}-
\frac{4\epsilon\gamma}{(1-\gamma)^2}
\delta_{\max}^{\mu, \pi_{k+1}}\cdot\delta^{\mu, \pi_{k+1}}>0.
\end{equation*}
\begin{proof}
	Let  $\hat{F}_{\mu, \pi_k}(\pi)\triangleq L_\mu (\pi) -
	\frac{2(1+\gamma)\epsilon}{(1-\gamma)^2} \delta_{\max}^{\mu, \pi_k}-
	\frac{4\epsilon\gamma}{(1-\gamma)^2} \delta_{\max}^{\mu, \pi}\cdot\delta^{\mu, \pi}$. 
	We know that  $\hat{F}_{\pi_k, \pi_k}(\pi_{k+1})=F_{\pi_{k}}(\pi_{k+1}) >0$.
	
	Using the locally sign-preserving property of continuous functions  $\hat{F}_{\mu, \pi_k}(\pi)$  about  $\mu$, we know that there exists constant  $\alpha>0$ and policy $\mu$ that satisfies  $d(\mu, \pi_k)<\alpha$ , then $\hat{F}_{\mu, \pi_k}(\pi_{k+1})>0$.
\end{proof}

\textbf{Lemma 4.1}\label{epsi}
Consider clipping parameter $ \epsilon^{\text{PPO}} $ of PPO algorithm and clipping parameter $ \epsilon^{\text{ToPPO}} $ of ToPPO algorithm, $ N=|M| $ presents the number of previous policies of $ M $, we have
\begin{equation}\label{ep_value}
\epsilon^{\text{ToPPO}}=\left\{
\begin{aligned}
&\frac{4}{N+4}\epsilon^{\text{PPO}}, & N\geq 2 \\
&\ \quad\epsilon^{\text{PPO}}\ \ \ \ \ \ , & N =1
\end{aligned}
\right.,
\end{equation}
then at every update the worst-case expected performance loss is the same under both algorithms.
\begin{proof}
	We adopt a similar proof to the paper (Queeney, Paschalidis, and Cassandras 2021).
	Assume $ M=\{\pi_k, \pi_{k-1}, \cdots, \pi_{k-N+1}\} $, we sample a policy $ \pi_{k-i}\in M\setminus\{\pi_k\} $, according to lemma \ref{pi_k_i_pi}, we have
	\begin{align*}
	\mathbb{E}_{s\sim \rho^{\pi_{k-i}}}\mathbb{D}_{\mathcal{T} \mathcal{V}}(\pi_{k-i}, \pi)(s)
	\leq&
	\mathbb{E}_{s\sim \rho^{\pi_{k-i}}}\mathbb{D}_{\mathcal{T} \mathcal{V}}(\pi_{k}, \pi)(s)+
	\sum_{j=1}^{i}
	\mathbb{E}_{s\sim \rho^{\pi_{k-i}}}\mathbb{D}_{\mathcal{T} \mathcal{V}}(\pi_{k-j}, \pi_{k-j+1})(s)\\
	\leq& \frac{\epsilon^{\text{ToPO}}}{2} (i+1).
	\end{align*}
	When $ \mu=\pi_k $, we have 
	$ \mathbb{E}_{s\sim \rho^{\pi_{k}}}\mathbb{D}_{\mathcal{T} \mathcal{V}}(\pi_{k}, \pi)(s)
	\leq \frac{\epsilon^{\text{ToPO}}}{2} $.
	According to Algorithm 1, we know that we use an on-policy data and an off-policy data to update.
	So, the average of the upper bound is $ \frac{1}{2}(\frac{\epsilon^{\text{ToPO}}}{2}+\frac{\epsilon^{\text{ToPO}}}{2}(i+1))= \frac{\epsilon^{\text{ToPO}}}{4}(i+2)$.
	
	Since we are sampling randomly, its expectation is $ \frac{1}{N-1}(3+4+\cdots+(N+1))\frac{\epsilon^{\text{ToPO}}}{4}= \frac{N+4}{2}\frac{\epsilon^{\text{ToPO}}}{4}$.
	
	For PPO, we have $ \mathbb{E}_{s\sim \rho^{\pi_{k}}}\mathbb{D}_{\mathcal{T} \mathcal{V}}(\pi_{k}, \pi)(s)
	\leq \frac{\epsilon^{\text{PPO}}}{2} $.
	
	By comparing the above two formulas, we see that the worst-case expected performance loss for each update is approximately the same for both PPO and ToPPO, when
	\begin{equation*}
	\frac{\epsilon^{\text{PPO}}}{2} = \frac{N+4}{2}\frac{\epsilon^{\text{ToPPO}}}{4}\Longrightarrow
	\epsilon^{\text{ToPPO}}=\frac{4}{N+4}\epsilon^{\text{PPO}}.
	\end{equation*}
\end{proof}

Note that from this Lemma, we know that $ \epsilon^{\text{ToPPO}}\leq\epsilon^{\text{PPO}} $ and $ \epsilon^{\text{ToPPO}} $ decreases monotonically \emph{w.r.t.} $ N\in\{1, 2, 3, \cdots\} $. Given $ N $, $ \epsilon^{\text{ToPPO}} $ is calculated by Eqn.(\ref{ep_value}). In Algorithm 1, we need to keep or delete trajectory data to satisfy constraints. So the true length of $ M $ is not fixed during the training process. When $ \epsilon^{\text{ToPPO}} $ is recalculated according to the true length of $ M $ for each optimization, it has a negative impact on performance (see Figure \ref{Compa_adjust} of the appendix), perhaps because the changing $ \epsilon^{\text{ToPPO}} $ value affects the stability of the training. Hence, we need to fix it. In practice, how do we determine the clipping parameter $ \epsilon^{\text{ToPPO}} $ ? We suggest that it should choose a parameter larger than $ \epsilon^{\text{ToPPO}} $. This is because our algorithm
includes a policy selection step.
During the iteration, we will remove the policy from $M $ that does not satisfy $ \delta^{\mu, \pi_{\theta_{k}}}\leq\alpha $. Hence, the number of previous policies of $ M $ may be less than the initial length $ N $. For example, when $ N=5 $ and $ \epsilon^{\text{PPO}}=0.2 $, we have $ \epsilon^{\text{ToPPO}}=0.089$. We could choose $ \hat{\epsilon}^{\text{ToPPO}}=0.1>0.089 $. We see that Eqn.(\ref{ep_value}) gives the minimum value for choosing the clipping parameter $ \epsilon^{\text{ToPPO}} $, and it may be necessary to adjust the parameter $ \epsilon^{\text{ToPPO}} $ in practice.

\subsection{Reanalyze the PPO algorithm}

\textbf{Theorem }
Consider the current policy $ \pi_k $, define
\begin{equation*}
\pi_{k+1}= \arg\max_{\pi} L_k(\pi),
\end{equation*}
where $ L_k(\pi)=\mathbb{E}_{(s,a)\sim \rho^{\pi_k}}
\min\left(\frac{\pi(a|s)}{\pi_k(a|s)}A^{\pi_k}(s,a),\text{clip}\left(\frac{\pi(a|s)}{\pi_k(a|s)}, 1-\epsilon, 1+\epsilon \right)A^{\pi_k}(s,a)\right) $.
Assume $ \pi_{k+1}$ exists and satisfies $ L_k(\pi_{k+1})\geq 0 $,  then we have
\begin{equation*}
\mathbb{E}_{s\sim\rho^{\pi_{k}}} V^{\pi_{k+1}}(s) \geq \mathbb{E}_{s\sim\rho^{\pi_{k}}} V^{\pi_{k}}(s).
\end{equation*}
\begin{proof}
	Since $ L_k(\pi_{k+1}) \geq 0$, we can get 
	\begin{equation*}
	\mathbb{E}_{(s,a)\sim \rho^{\pi_k}}
	\left(\frac{\pi_{k+1}(a|s)}{\pi_k(a|s)}A^{\pi_k}(s,a)\right)\geq 0.
	\end{equation*}
	This can be reformulated as 
	\begin{equation*}
	\mathbb{E}_{(s, a)\sim \rho^{\pi_k},\pi_{k+1}}
	Q^{\pi_{k}}(s, a)
	\geq
	\mathbb{E}_{(s, a)\sim \rho^{\pi_k},\pi_{k}}
	Q^{\pi_{k}}(s, a).
	\end{equation*}
	Now, using a similar way of the Policy Improvement theorem's proof, we can get
	\begin{align*}
	&\mathbb{E}_{(s, a)\sim \rho^{\pi_k},\pi_{k}}
	Q^{\pi_{k}}(s, a)\\
	\leq&
	\mathbb{E}_{(s, a)\sim \rho^{\pi_k},\pi_{k+1}}
	\mathbb{E}[R(s, a)+\gamma Q^{\pi_{k}}(s', a')|\pi_{k}]\\
	\leq& \mathbb{E}_{(s, a)\sim \rho^{\pi_k},\pi_{k+1}}
	\mathbb{E}[R(s, a)+\gamma R(s',a')+\gamma^2 Q^{\pi_{k}}(s'', a'')|\pi_{k}]\\
	\vdots& \\
	\leq &\mathbb{E}_{(s, a)\sim \rho^{\pi_{k}},\pi^k+1} Q^{\pi_{k+1}}(s, a).
	\end{align*}
	
	Finally, we have
	\begin{equation*}
	\mathbb{E}_{s\sim\rho^{\pi_{k}}} V^{\pi_{k}}(s)
	\leq 
	\mathbb{E}_{s\sim\rho^{\pi_{k}}} V^{\pi_{k+1}}(s).
	\end{equation*}
	
\end{proof}

\begin{figure}[t]
	\centering
	\subfloat[BattleZone]{\includegraphics[width=0.2\textwidth]{Result/atari_per/toppo_evaluate_return_train_BattleZoneNoFrameskip-v4.pdf}}
	\subfloat[Breakout]{\includegraphics[width=0.2\textwidth]{Result/atari_per/toppo_evaluate_return_train_BreakoutNoFrameskip-v4.pdf}}
	\subfloat[Carnival]{\includegraphics[width=0.2\textwidth]{Result/atari_per/toppo_evaluate_return_train_CarnivalNoFrameskip-v4.pdf}}
	\subfloat[Centipede]{\includegraphics[width=0.2\textwidth]{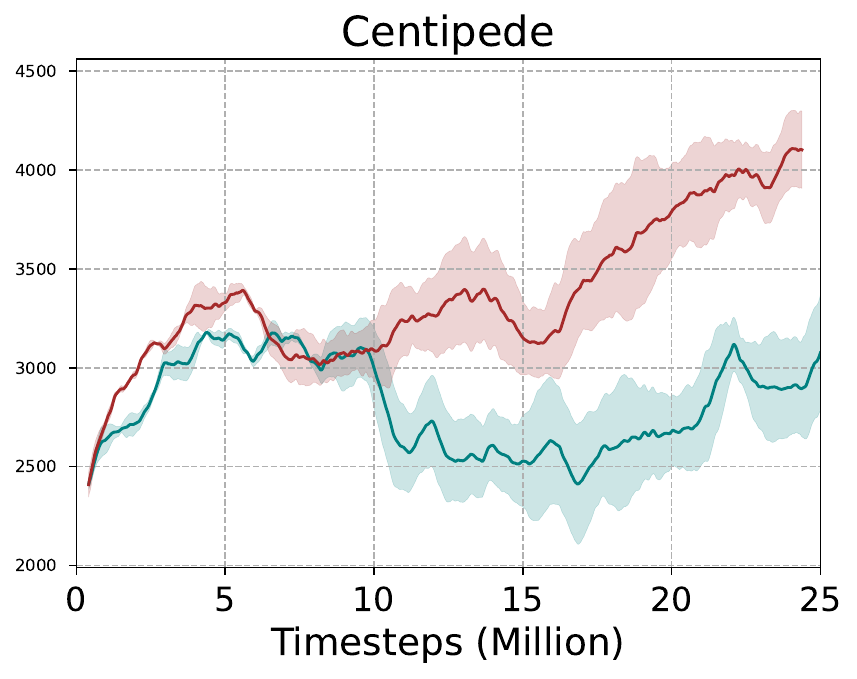}}
	\subfloat[CrazyClimber]{\includegraphics[width=0.2\textwidth]{Result/atari_per/toppo_evaluate_return_train_CrazyClimberNoFrameskip-v4.pdf}}\\
	\subfloat[Enduro]{\includegraphics[width=0.2\textwidth]{Result/atari_per/toppo_evaluate_return_train_EnduroNoFrameskip-v4.pdf}}
	\subfloat[FishingDerby]{\includegraphics[width=0.2\textwidth]{Result/atari_per/toppo_evaluate_return_train_FishingDerbyNoFrameskip-v4.pdf}}
	\subfloat[Gopher]{\includegraphics[width=0.2\textwidth]{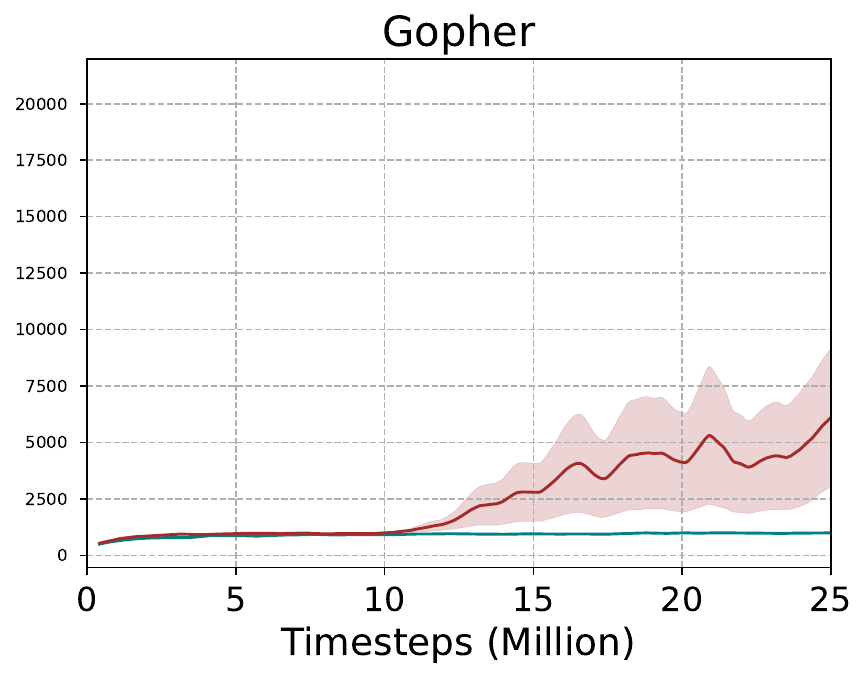}}
	\subfloat[Gravitar]{\includegraphics[width=0.2\textwidth]{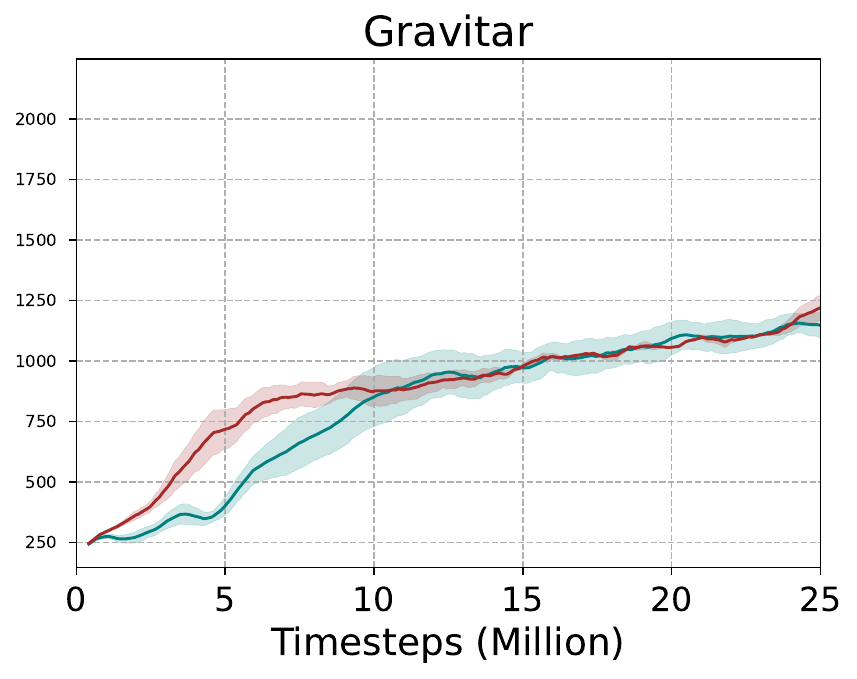}}
	\subfloat[Krull]{\includegraphics[width=0.2\textwidth]{Result/atari_per/toppo_evaluate_return_train_KrullNoFrameskip-v4.pdf}}\\
	\subfloat[KungFuMaster]{\includegraphics[width=0.2\textwidth]{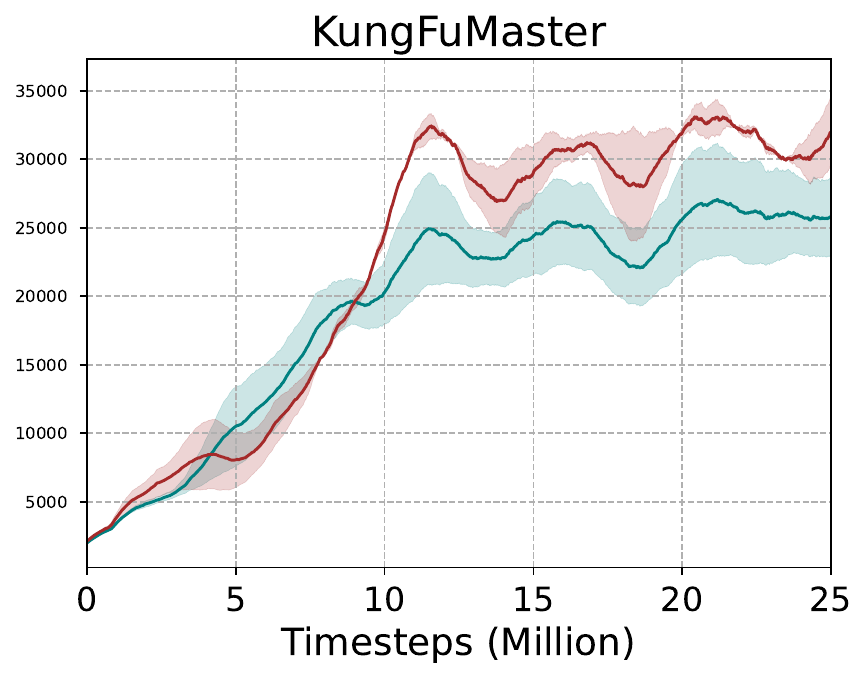}}
	\subfloat[MsPacman]{\includegraphics[width=0.2\textwidth]{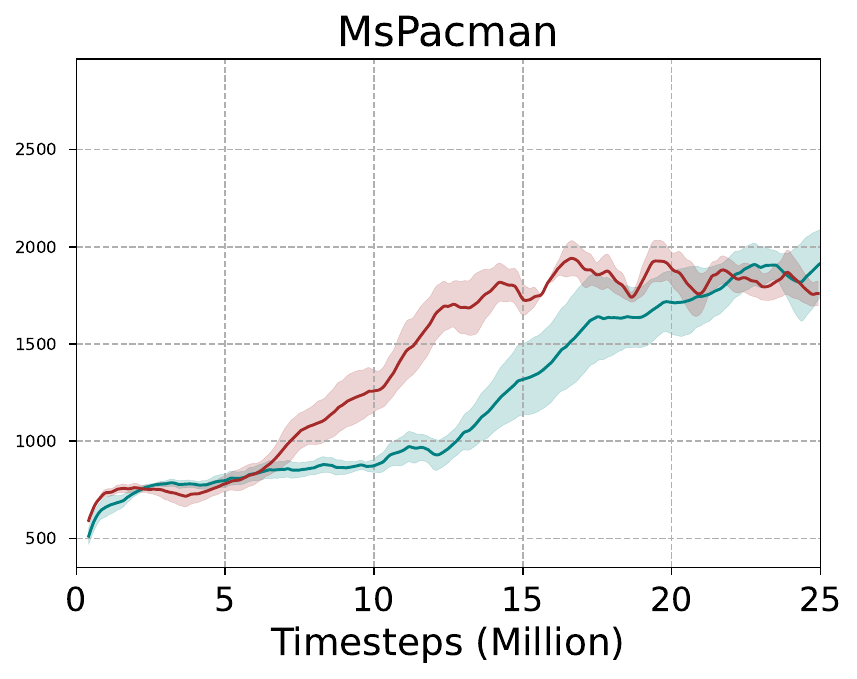}}
	\subfloat[NameThisGame]{\includegraphics[width=0.2\textwidth]{Result/atari_per/toppo_evaluate_return_train_NameThisGameNoFrameskip-v4.pdf}}
	\subfloat[Pooyan]{\includegraphics[width=0.2\textwidth]{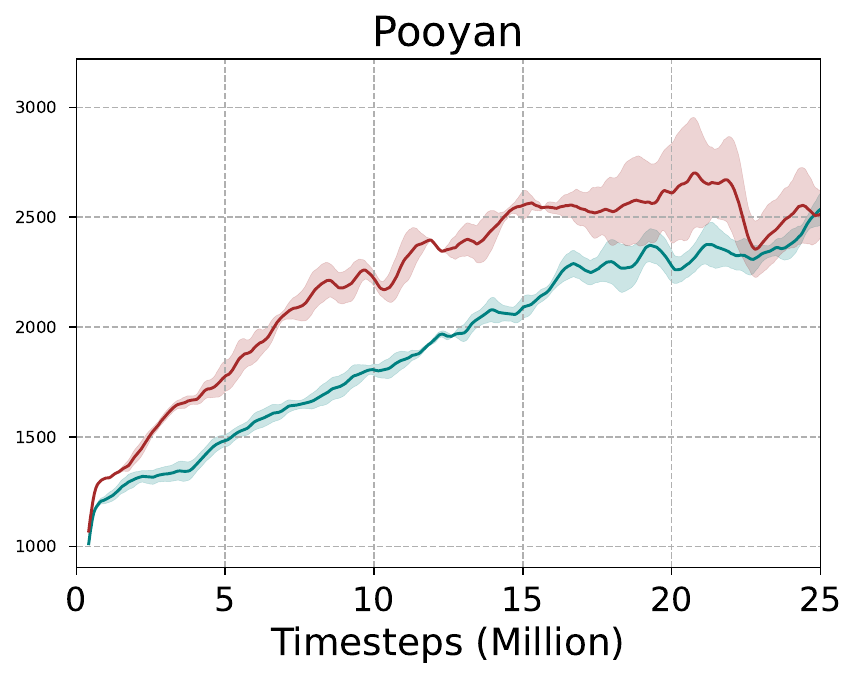}}
	\subfloat[Qbert]{\includegraphics[width=0.2\textwidth]{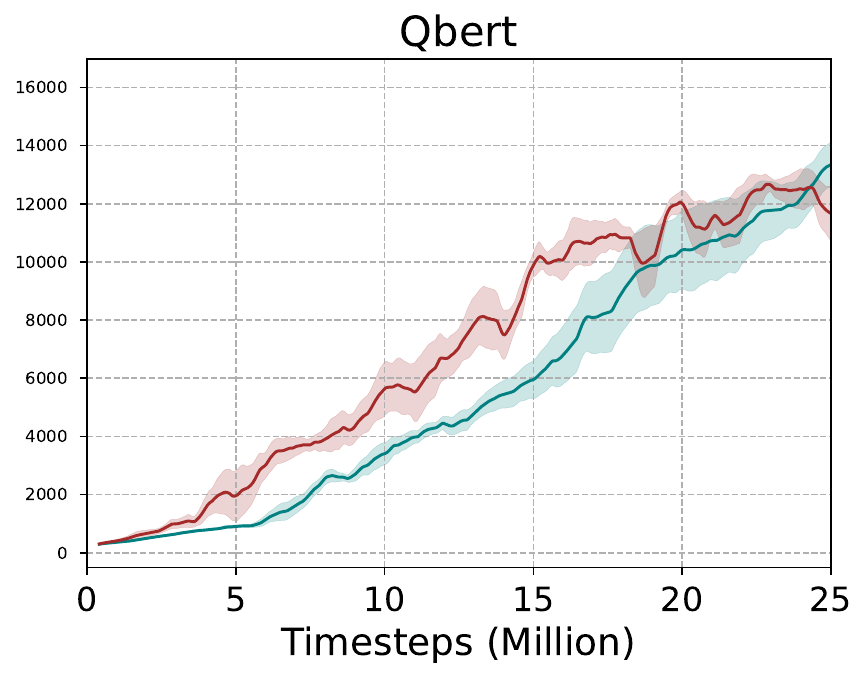}}
	\caption{Learning curves on the some Atari environments. Performance of \textit{ToPPO} vs. \textit{PPO}.
		The shaded region indicates the standard deviation of three random seeds.
		%		The X-axis represents the timesteps in the environment. The Y -axis represents the average return.
	}
	\label{performance_atari1}
\end{figure}

\subsection{Additional experimental results}

%\begin{table}[th!]
\begin{wraptable}{R}{0.55\textwidth}
	\centering
	\caption[width=\columnwidth]{Hyperparameters for ToPPO on Mujoco tasks.}\label{Hy-toppo}
	%	\scalebox{0.88}{
	\begin{tabular}{l|r}
		\toprule
		\multicolumn{1}{l}{Hyperparameter} &
		\multicolumn{1}{r}{ Value } \\
		\midrule
		Discount rate $ \gamma $ & 0.995 \\
		GAE parameter            & 0.97  \\
		Minibatches per epoch    & 32    \\
		Epochs per update        & 10    \\
		Optimizer                & Adam \\
		Learning rate $ \phi $         & 3e-4  \\
		Minimum batch size ($ n $)     & 1024\\
		Number of previous policies ($ N=|M| $) & 5\\
		Clipping parameter $ \epsilon^{\text{ToPPO}} $ & 0.1\\
		Filter boundary $ \alpha $     & 0.03 \\
		\bottomrule
	\end{tabular}
	%	}
\end{wraptable}

%\end{table}

\begin{algorithm}[t]
	\textbf{Input}: Environment $ E $, filter boundary $ \alpha $, discount factor $ \gamma $, batcg size $ n $, clipping parameter $ \epsilon $,\\
	Initialize policy network parameter $ \theta $,\\
	Initialize previous policies data $ M $ and the maximum length of $M$, $  N=|M|$.
	\begin{algorithmic}[0] %[1] enables line numbers
		\FOR{$k=0,1,2,\ldots$}
		\STATE 	\underline{Collect data}:\\
		Collect $n$ samples with $\pi_{\theta_{k}}$ on environment $ E $.\\
		Add $ n $ samples to previous policy data $ M $.
		\STATE \underline{Update policy network}:\\
		Samples a policy data $ \mu $ from set $ M \setminus\{\pi_{\theta_{k}}\} $, and use $n$ samples from the current policy $\pi_{\theta_{k}}$ and $\mu$ (an on-policy data and an off-policy data). \\
		Approximately maximize the empirical objective $L_k (\pi)$ in Eqn.(\ref{off_ppo_loss}) by using stochastic gradient ascent to get new policy network $ \pi_{\theta_{k+1}} $.
		\STATE \underline{Select policies}:\\
		Calculate sample-based estimate $\widehat{\delta}^{\mu, \pi_{\theta_{k+1}}}$ in Eqn.(4), where $ \mu \in M \setminus\{\pi_{\theta_{k+1}}\} $.	
		\IF {$\widehat{\delta}^{\mu, \pi_{\theta_{k+1}}} > \alpha$}
		\STATE  delete $\mu$ data in $ M $
		\ELSE
		\STATE keep $\mu$ data in $ M $	
		\ENDIF
		\ENDFOR
	\end{algorithmic}
	\caption{Transductive Off-policy Proximal Policy Optimization (ToPPO)}\label{ToPPO}
\end{algorithm}

To verify the effectiveness of the proposed ToPPO method, we select seven continuous control tasks from the MuJoCo environments \cite{Tod} in OpenAI Gym \cite{Gre}. We conduct all the experiments mainly based on the code from \cite{Que}. The test procedures are averaged over ten test episodes across ten independent runs. 
The same neural network architecture is used for all methods
The policy network is a Gaussian distribution, and the output of the state-value network is a scalar value. The mean action of the policy network and state-value network are a multi-layer perceptron with hidden layer fixed to [64, 64] and tanh activation \cite{Hen}. The standard deviation of the policy network is parameterized separately \cite{Schtrpo, Schppo}. For the experimental parameters, we use the default parameters from \cite{Duan, Hen}, for example, the discount factor is $ \gamma=0.995 $, and we use the Adam optimizer throughout the training progress. For PPO, the clipping parameter is $ \epsilon^{\text{PPO}}=0.2 $, and the batch size is $ B=2048 $. For GePPO, the clipping parameter is $ \epsilon^{\text{PPO}}=0.1 $, and the batch size of each policy is $ B=1024 $. For TRPO and off-policy TRPO (OTRPO), the bound of trust region is $ \delta=0.01 $, and the batch size of each policy is $ B=1024 $.

\begin{wrapfigure}[]{L}{0.42\textwidth}
	%	\vspace{-10pt}
	\vskip -5pt
	\centering
	\subfloat[Swimmer]{\includegraphics[width=0.3\textwidth]{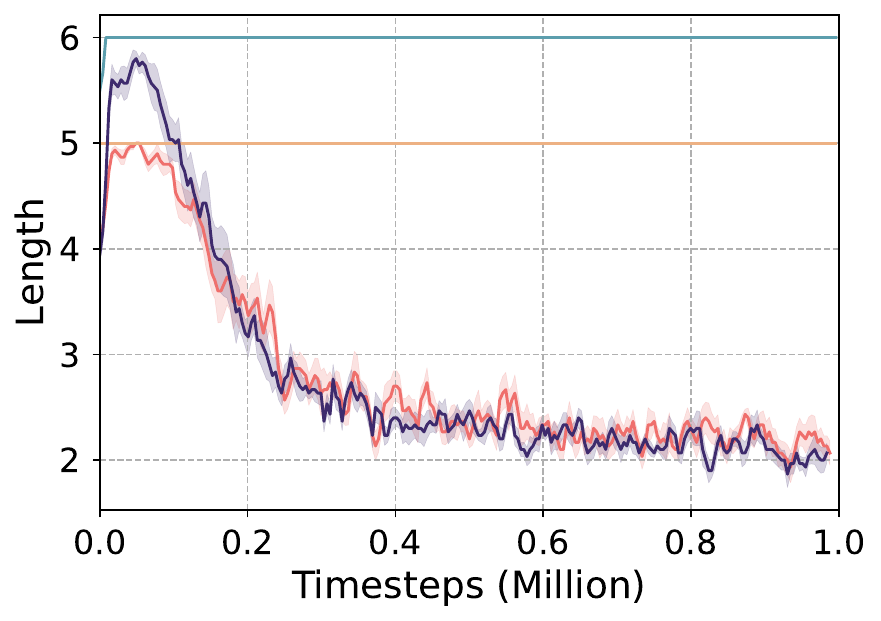}}
	\caption{The number of policies in $ M $. }
	\label{Compa}
	\vspace{-15pt}
\end{wrapfigure}

For our proposed method ToPPO, $ M $ is the replay buffer that stores the old policy trajectories. $ N $ represents the maximum length of $ M $, and the default value is 5. According to the Lemma \ref{epsi}, the clipping parameter $ \epsilon^{\text{ToPPO}} $ is greater than or equal to 0.089, and we set it to $ \epsilon^{\text{ToPPO}}=0.1 $.  The batch size of each policy is $ B=1024 $. Since the TV distance of between the previous policy $ \mu $ and the current policy $ \pi_k $ cannot be calculated exactly, we use the KL divergence $\delta^{\mu, \pi_k}=\mathbb{E}_{s \sim \rho^{\mu}} \mathbb{D}_{KL}(\mu, \pi_k)(s)$ to replace it in practice. The filter boundary is $ \alpha=0.03 $. And we use early stopping trick. 
From Algorithm \ref{ToPPO}, we randomly select a policy $ \mu $ from $ M\setminus\{\pi_{k}\} $, and use $ 1024 $ samples from each of $ \pi_{k} $ and $ \mu $ to update the policy. The experiments are performed on a computer with an Intel Xeon(R) CPU, 64GB of memory and a GeForce RTX 3090 Ti GPU.

\begin{figure}[]
	%	[h] ~here
	%	[t] ~ top
	%	[b] ~ bottom
	%	[p] ~ page of its own
	\begin{minipage}[b]{.99\linewidth}
		\centering
		\subfloat[HalfCheetah]{\includegraphics[width=0.24\textwidth]{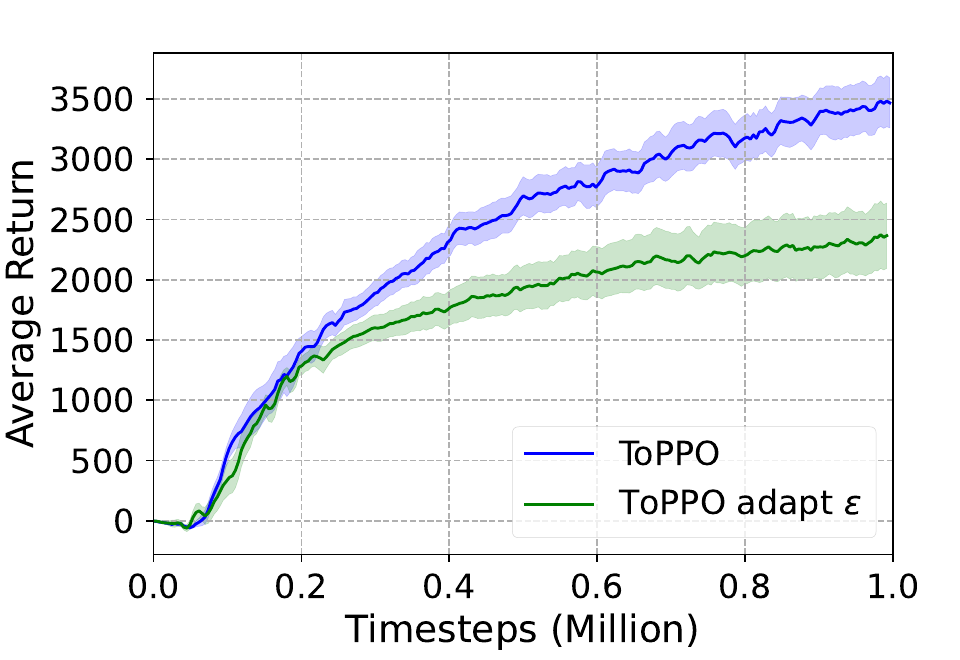}}
		\subfloat[HumanoidStandup]{\includegraphics[width=0.24\textwidth]{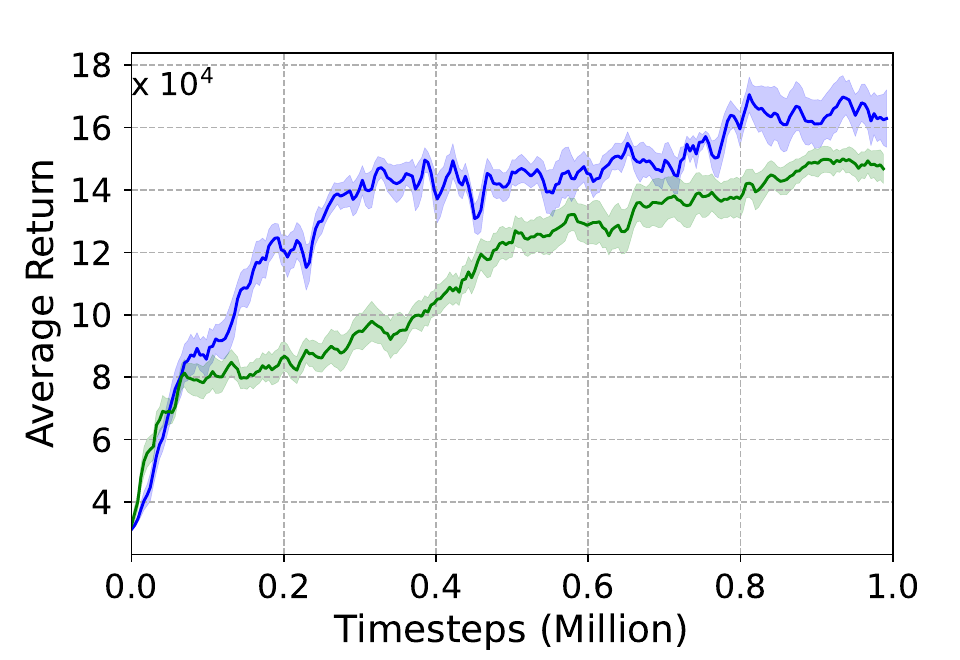}}
		\subfloat[Swimmer]{\includegraphics[width=0.24\textwidth]{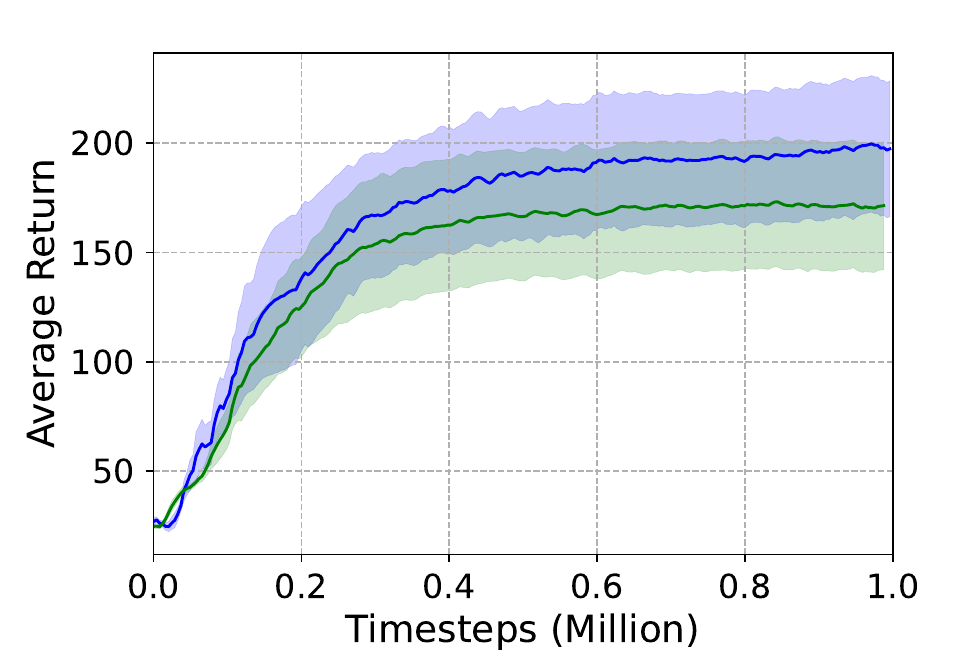}}
		\subfloat[Reacher]{\includegraphics[width=0.24\textwidth]{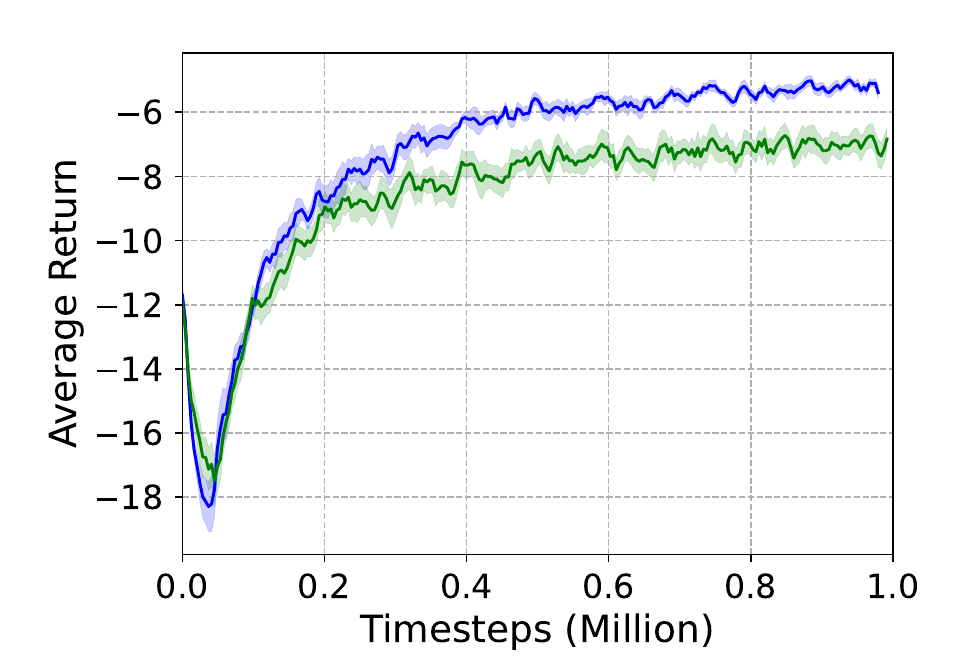}}
	\end{minipage}
	\caption{Learning curves on the Gym environments. 
		ToPPO \textit{adapt} $ \epsilon $ represents an adaptive $ \epsilon $ value by the $ N $ value in each iteration.
		%		The shaded region indicates the standard deviation of ten random seeds.
		The X-axis represents the timesteps in the environment. 
		%		The Y -axis represents the average return.
	}
	\label{Compa_adjust}
\end{figure}

\begin{figure*}[t]
	\centering	
	\subfloat[Swimmer]{\includegraphics[width=0.3\textwidth]{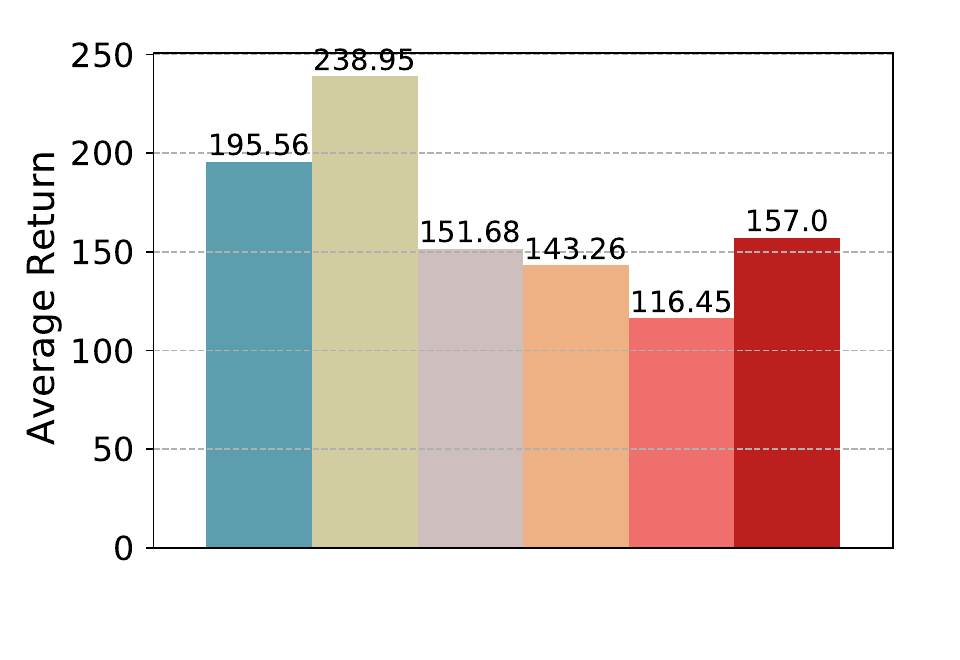}\hskip -5pt}
	\subfloat[HumanoidStandup]{\includegraphics[width=0.3\textwidth]{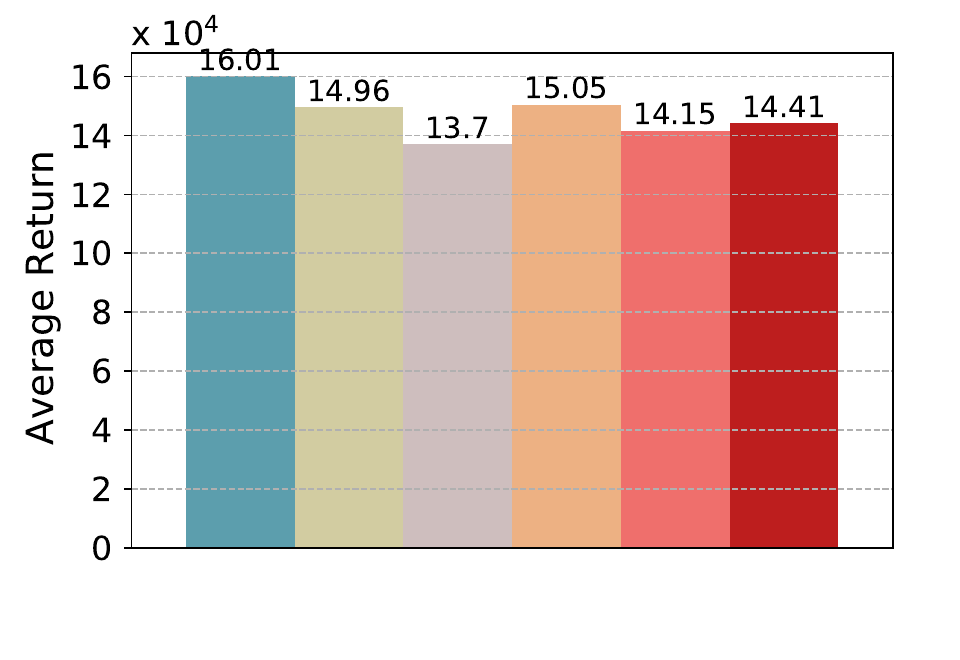}\hskip -5pt}
	\subfloat[Walker2d]{\includegraphics[width=0.3\textwidth]{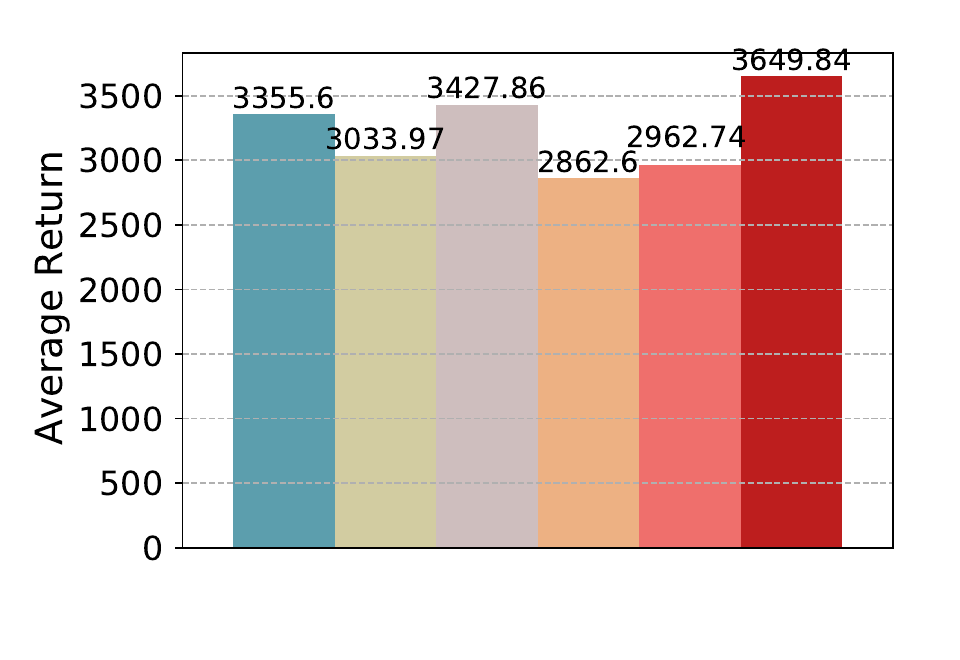}\hskip -5pt}
	\subfloat{\includegraphics[width=0.2\textwidth]{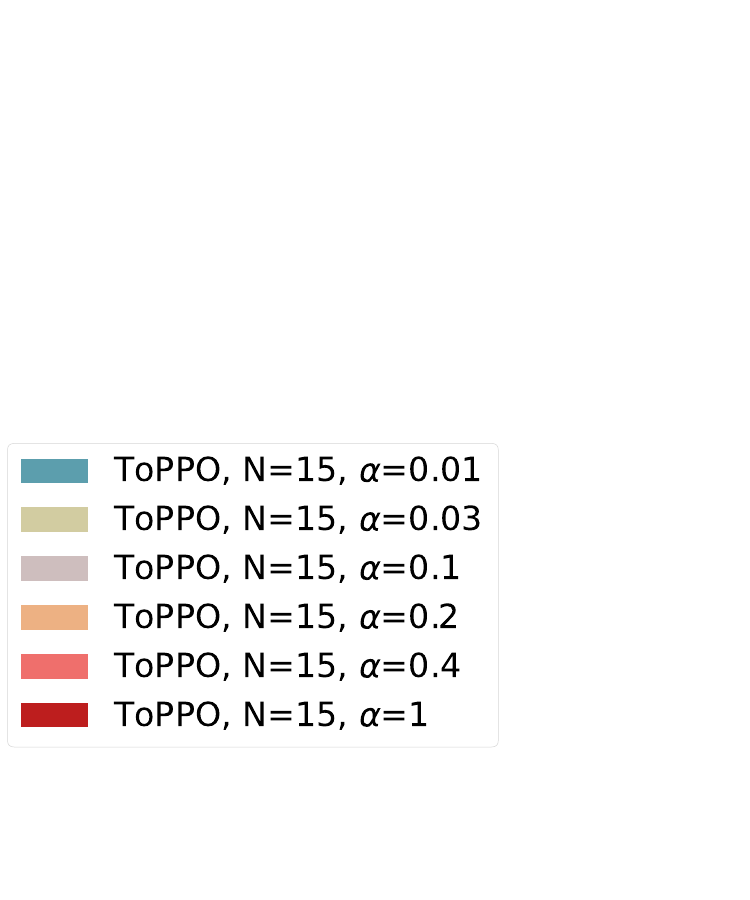}}
	\\	
	\subfloat[Swimmer]{\includegraphics[width=0.3\textwidth]{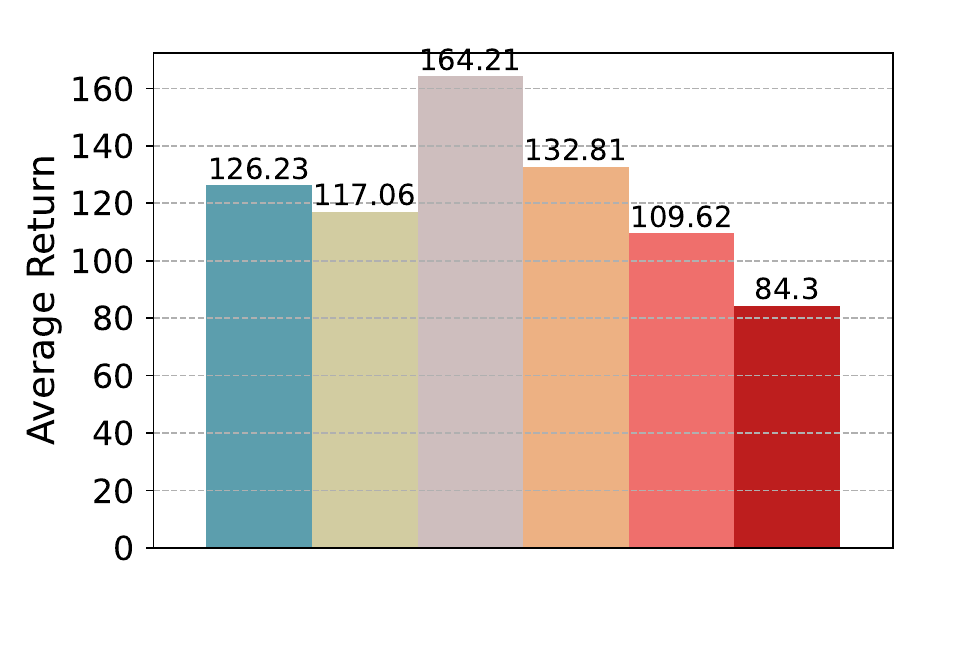}\hskip -5pt}
	\subfloat[HumanoidStandup]{\includegraphics[width=0.3\textwidth]{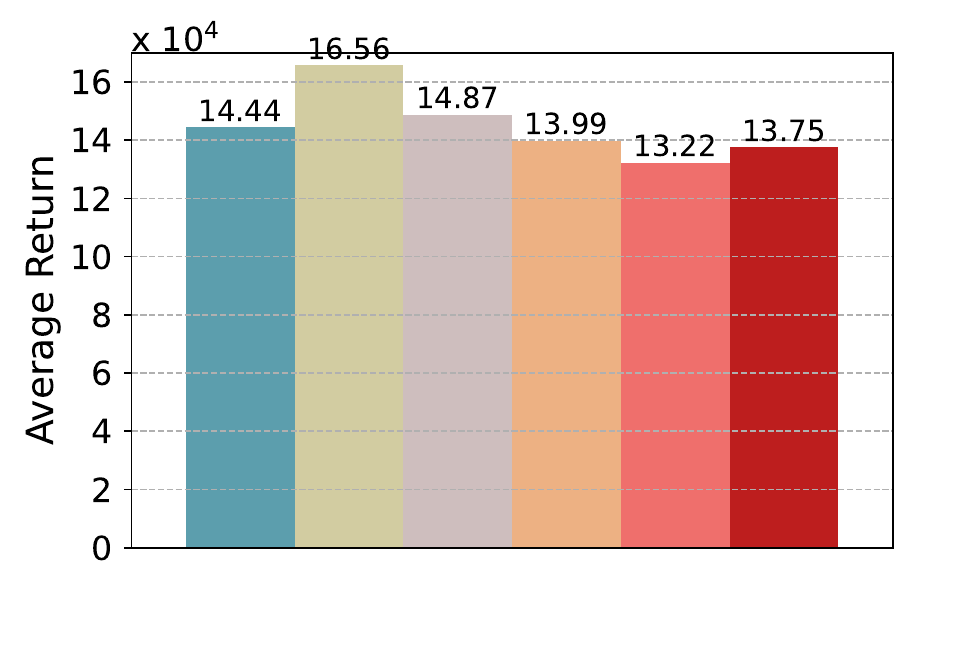}\hskip -5pt}
	\subfloat[Walker2d]{\includegraphics[width=0.3\textwidth]{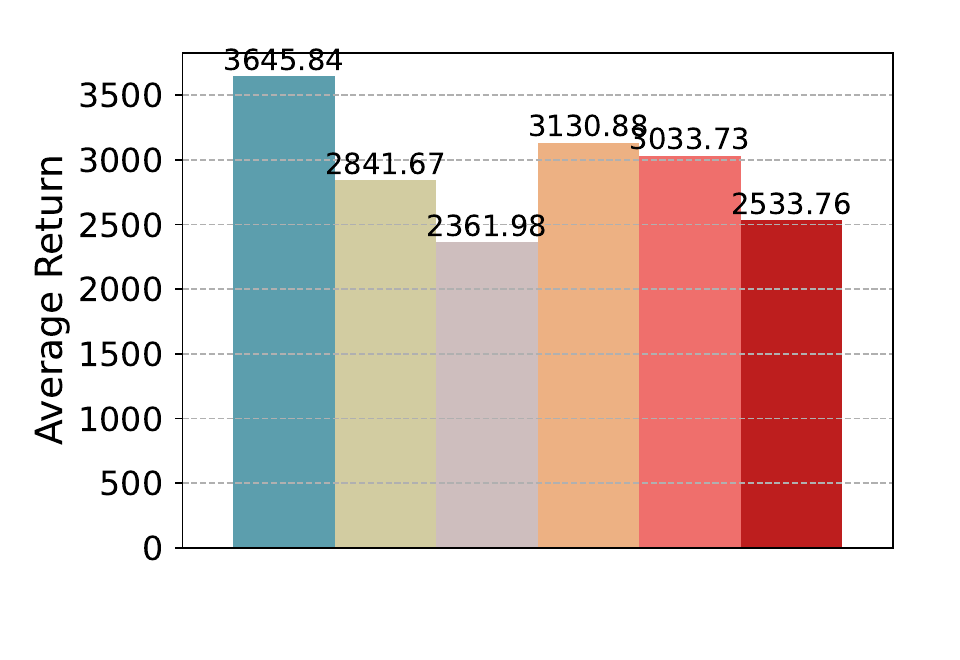}\hskip -5pt}
	\subfloat{\includegraphics[width=0.2\textwidth]{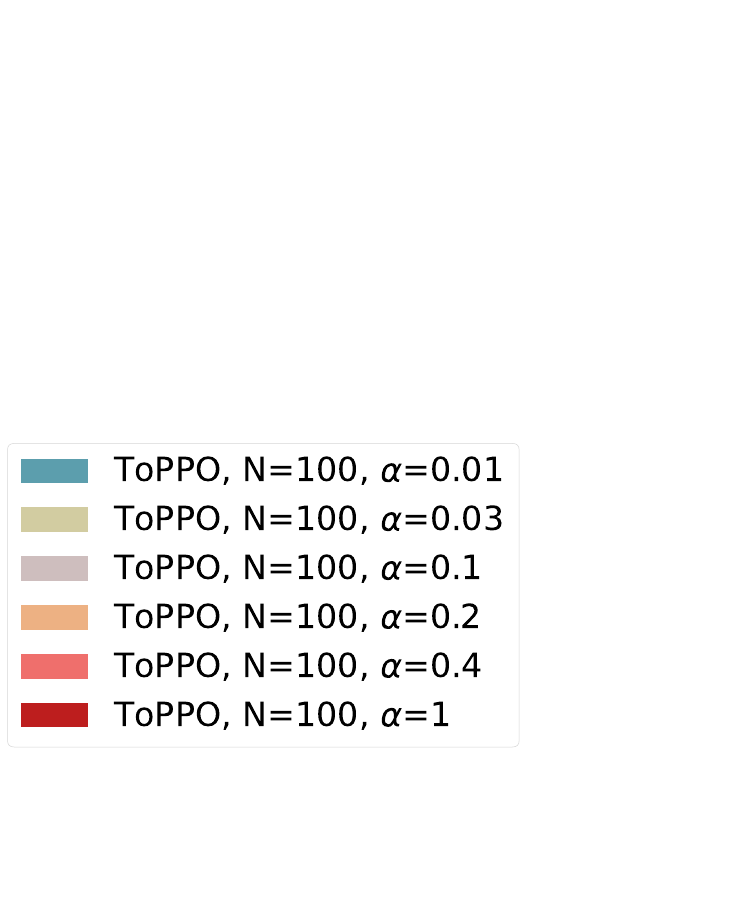}}
	\\
	\caption{Ablation study on the parameter $ \alpha $ under fixed $ N $ in each row in ToPPO. 
		Comparison about the final performance with different parameters.
		Better performance will be obtained if parameter $ \alpha $ is fine-tuned, when fixing $ N $.
		The Y -axis represents the average return.
	}
	\label{performance_N}
\end{figure*}

%%%%%%%%%%%%%%%%%%%%%%%%%%%%%%%%%%%%%%%%%%%%%%%%%%%%%%%%%%%%

\end{document}